\documentclass{article}

\usepackage[eandd, final]{neurips_2026}

\usepackage{microtype}
\usepackage{graphicx}
\usepackage{subcaption}
\usepackage{booktabs} 
\usepackage{makecell}
\usepackage{placeins}

\usepackage{xcolor}
\usepackage[most]{tcolorbox}
\usepackage{float}

\definecolor{lightgreen}{HTML}{DFFFD6}
\definecolor{lightred}{HTML}{FFD6D6}
\definecolor{lightyellow}{HTML}{FFFFD6}
\definecolor{lightblue}{HTML}{D6F0FF}
\definecolor{lightgray}{HTML}{E0E0E0}

\usepackage{hyperref}

\usepackage{amsmath}
\usepackage{amssymb}
\usepackage{mathtools}
\usepackage{amsthm}

\usepackage[capitalize,noabbrev]{cleveref}

\theoremstyle{plain}
\newtheorem{theorem}{Theorem}[section]
\newtheorem{proposition}[theorem]{Proposition}

\theoremstyle{definition}

\theoremstyle{remark}

\usepackage[textsize=tiny]{todonotes}

\usepackage{enumitem}
\usepackage{placeins}
\usepackage{longtable}
\usepackage{caption}

\newcommand{\mycaption}[2]{\caption{\textbf{#1}. #2}}
\newcommand{\mypar}[1]{\medskip\noindent\textbf{#1}}
\newcommand{\benchname}[0]{\textsc{TasteBench} }

\usetikzlibrary{decorations.pathreplacing,calc}
\colorlet{boxaccent}{black}
\newcommand{\key}[1]{\textcolor{boxaccent}{\textbf{#1}}}

\definecolor{D2Teal}{HTML}{1B9E77}
\definecolor{D2Orange}{HTML}{D95F02}
\definecolor{D2Purple}{HTML}{7570B3}
\definecolor{D2Grey}{HTML}{555555} 
\definecolor{D2Green}{HTML}{66a61e} 
\definecolor{D2Blue}{HTML}{7570b3} 
\colorlet{boxaccent}{black}

\newtcolorbox{custombox}[2]{
  blanker,
  left=0.9cm,
  top=5pt,
  bottom=5pt,
  before skip=15pt,
  after skip=15pt,
  coltext=black,
  before upper={
    \colorlet{boxaccent}{#2}
    \setlist[itemize]{
        label=\textcolor{#2}{\textbullet}, 
        leftmargin=12pt, 
        itemsep=8pt
    }
  },
  overlay={
    \node[anchor=center, rotate=90, text=#2, font=\ttfamily] 
      at ($(frame.west)+(10pt,0)$) {\MakeUppercase{#1}};
    \fill[#2, rounded corners=2.5pt] 
      ($(frame.north west)+(23pt,0)$) rectangle ($(frame.south west)+(25pt,0)$);
  }
}

\setlist[enumerate]{leftmargin=20pt}
\setlist[itemize]{leftmargin=15pt} 
\title{\textsc{TasteBench}: Multimodal Benchmark for Sensory Prediction, from Molecules to Sustainable Foods}

\author{%
  Anna T. Thomas\textsuperscript{1,2}\thanks{These authors contributed equally.} \quad
  Sohum Patnaik\textsuperscript{2}\footnotemark[1] \quad
  Caroline Cotto\textsuperscript{3} \quad
  Benjamin Sanchez-Lengeling\textsuperscript{4} \\[0.5em]
  \textsuperscript{1}Stanford Computer Science \\
  \textsuperscript{2}Food Intelligence Lab \quad
  \textsuperscript{3}NECTAR \\
  \textsuperscript{4}University of Toronto Chemical Engineering
}

\begin{document}
\raggedbottom

\maketitle

\begin{abstract}
Sustainable protein discovery lacks the fast computational proxies, analogous to
molecular docking or density functional theory, that accelerate drug and materials
discovery. Evaluating whether a novel food tastes like its animal-based target
requires expensive human sensory panels, bottlenecking the design-build-test loop.
We introduce \textsc{TasteBench}, a multimodal benchmark and privacy-preserving competition for
sensory prediction, spanning two tasks: a food-level ranking task built on 21K+
human evaluations across 215 plant-based foods in 24 product categories, yielding 935 within-category ranking pairs, and a
supporting molecular-level taste classification task over 15K flavor molecules. 
To enable rigorous interpretation of model performance, we characterize the
ground truth: inter-rater agreement among panelists is low (Krippendorff's
$\alpha = .077$), and the split-half reliability ceiling of panel-aggregated rankings
is .825, establishing the range within which ML systems on this benchmark
should be assessed. We evaluate baselines across four input modalities; on
the same pairs panelists rated, the best model achieves .661 pairwise accuracy,
competitive with the median individual panelist (.650), and .683 across all within-category pairs.
\textsc{TasteBench} provides the evaluation infrastructure and baselines for measuring progress on computational screening for sustainable protein
discovery. Our code is available at \texttt{https://github.com/Food-Intelligence-Lab/tastebench/}.


\end{abstract}

\section{Introduction}

Drug and materials discovery have been transformed by fast computational proxies - molecular
docking, density functional theory - that allow researchers to screen candidates 
\textit{in silico} before committing to costly experiments~\citep{fan2019molecular, jain2016computational}. Sustainable protein development lacks any such proxy. Today, evaluating whether a plant-based formulation tastes like its animal-based target requires recruiting large panels of human tasters, a process that is slow, expensive, and inherently low-throughput. This bottleneck matters because diversifying protein sources away from animal agriculture is among the higher-leverage interventions for climate change mitigation~\citep{ipcc2019land}. 
Yet 46\% of consumers cite poor taste as a barrier to repeat purchase, and large-scale benchmarking has documented persistent taste gaps between sustainable proteins and their animal-based counterparts \citep{nectar2025taste}.

The formulation space is enormous, as varying protein sources, binding agents, fats,
flavors, and processing conditions creates a combinatorial design problem, but the
evaluation bottleneck means that only a handful of candidates can be panel-tested per
development cycle. Just as molecular docking ranks drug candidates by predicted
binding affinity so that only top-scoring compounds advance to cell-based assays, a
sensory predictor could rank candidate formulations by predicted similarity to the
target animal product, focusing expensive panel testing on the most promising
candidates. An AI system that could reliably perform this ranking would accelerate
the design-build-test loop. More ambitiously, such a system could eventually
participate in closed-loop formulation optimization, in which candidates are iteratively proposed and
refined.

Prior work has begun to explore this direction.~\citet{thomas2025what} evaluated
several LLMs on sensory prediction tasks constructed from an earlier version of the
NECTAR dataset, but considered only text-based inputs and did not attempt to improve
upon foundation model performance. Separately, recent work on molecular-level taste
and olfaction prediction has produced capable models for classifying individual
compounds \citep{zimmermann2025chemical, lee2023principal}, but no prior work has
explored whether such molecular representations transfer to food-level
sensory prediction. More broadly, no standardized benchmark exists for this task.
Progress has been difficult to measure in part because sensory data carries
commercial sensitivity: companies that participate in panel testing are reluctant to
share results that could identify underperforming products. Building useful
evaluation infrastructure for this domain therefore requires not only curating tasks
and baselines, but also designing data-sharing mechanisms that preserve brand
privacy. Our contributions include:

\begin{custombox}{Contributions}{D2Green}
\begin{itemize}
\item \key{A multimodal, multiresolution benchmark and privacy-preserving competition for sensory prediction.} We introduce \textsc{TasteBench}, spanning two tasks: a food-level ranking task (21K+ evaluations across 215 products, yielding 935 pairs) and a molecular-level taste classification task (15K+ molecules). The benchmark integrates four feature cost tiers - ingredient text, nutrition, molecular composition, and appearance - enabling systematic study of the cost-accuracy tradeoff. Both tasks are released as public Kaggle competitions, using decoy data to prevent brand-identity inference.
\item \key{Ground truth characterization for ML evaluation.} We quantify the reliability of the human judgments against which models are scored. The median panelist achieves .650 pairwise ranking accuracy against the panel mean, inter-rater agreement is low ($\alpha = .077$), and the split-half reliability ceiling is .825. This establishes the interpretive range for model performance: scores near .650 are indistinguishable from a single panelist, while scores approaching .825 hit the noise floor of the ground truth. This confirms that panel-aggregated targets are the appropriate ground truth for ML evaluation in this domain.
\item \key{Computational baselines and empirical findings.} We evaluate unsupervised distance methods, zero-shot foundation models, supervised pairwise models, and ensembles. On within-block pairs, the best model achieves .661 accuracy, competitive with the median panelist; on all within-category pairs, the best model achieves .683. Furthermore, three molecular encoders -~\citet{zimmermann2025chemical}'s Flavor Analysis and Recognition Transformer (FART), a from-scratch D-MPNN, and a fine-tuned ChemBERTa-2 - span .84--.90 molecular-classification accuracy yet reach statistically indistinguishable food-level ranking accuracy (${\approx}.68$), suggesting that directional improvements on a component-level ML task do not strictly transfer to the system-level ML task the benchmark is designed to measure.
\end{itemize}
\end{custombox}

By establishing this seed benchmark, \benchname unlocks several critical research questions for the community:

\begin{custombox}{Research Questions}{D2Blue}
\begin{itemize}
\item \key{Multimodal sensory integration.} How can models effectively fuse visual, nutritional, and structural chemical data to predict noisy, subjective human sensory responses?
\item \key{The chemistry/semantic gap.} Why do text-based LLMs currently outperform dedicated molecular graphs in macroscopic food matrices? How can we build multi-sensory grounded models that capture missing modalities (like texture and olfaction) without merely relying on memorized semantic correlations?
 \item \key{Data augmentation in low-data regimes.} How can the community utilize this calibrated seed benchmark to bridge small, high-fidelity psychophysical datasets with massive, unstructured web data (e.g., recipes, photos and reviews) to accelerate discovery?
\end{itemize}
\end{custombox}

We further discuss related work in Appendix~\ref{app:related-work}.

\section{Datasets}\label{sec:datasets}

\textsc{TasteBench} integrates four data sources to  map molecular, text, nutritional, and visual features to human sensory outcomes while preserving privacy.

\mypar{NECTAR} ~\citep{nectar2025taste,nectar2026taste} consists of 271 sustainable protein products and animal-based benchmarks and at least 100 sensory ratings per product, along the dimensions of overall liking, similarity to the target animal product, flavor, texture, appearance, and purchase intent. Here we use data published in NECTAR's 2025 Taste of the Industry report, consisting of 121 plant-based meats across 14 categories evaluated by 2,684 omnivores, and NECTAR's 2026 Taste of the Industry report, consisting of 94 dairy-free products across 10 categories evaluated by 2,183 omnivores and vegetarians. Available features include ingredient lists, nutritional information, and photos of each food item. The public interactive NECTAR data dashboard is \href{https://dashboard.palateinsights.com/taste-of-the-industry}{here}. To protect the privacy of participating brands, the specific product identities (e.g., brand names) have been de-identified, though the sensory targets, evaluation suite, and underlying product features needed to run the \textsc{TasteBench} benchmark are fully public. Access to the unblinded product identities can be requested  \href{https://docs.google.com/forms/d/e/1FAIpQLSdYXkB1yEOt_gMBrnHIpHv_xX9peh5NS1kdbfuY8-6-77kSug/viewform?usp=preview}{here}.  Our models were evaluated on 215 plant-based and vegetarian meat and dairy analogs across 24 categories. This creates 935 within-category ranking pairs.\footnote{Three of the 935 pairs have tied panel-mean similarity scores; they are excluded from supervised training and scored as 0.5 in pairwise-accuracy evaluation.} An additional 32 animal-based products were included as references for feature development. This resulted in a total of 247 products in our dataset. Data collection for 134 additional products will be completed by the end of 2026, and will continue in future years.

\mypar{Taste Like}\label{sec:taste-like}~\citep{nectarNECTARAcquires} is a directory of over
1,200 plant-based and fermentation-based products from 147 companies, with
ingredient lists, nutrition facts, and allergen metadata. In
\textsc{TasteBench}, these products serve a structural role: they anonymize
the NECTAR-evaluated products in the public Kaggle competition. Because
NECTAR's TASTY Awards publicly identify top performers, releasing features
for only NECTAR products would allow for inferring which
products scored poorly. Including Taste Like products (which lack sensory
ratings) prevents this inference. Taste Like products do not affect scoring.


\mypar{FoodAtlas} ~\citep{foodAtlas} is a knowledge graph containing 285{,}077 triplets of three entity types (\textit{food}, \textit{part}, \textit{chemical}) and four relation types (\textit{contains}, \textit{has part}, \textit{is a}, \textit{has child}) on three evidence quality levels (\textit{high}, \textit{medium}, \textit{low}), with 4{,}318 of them evaluated by human experts. We use FoodAtlas to expand each ingredient on a NECTAR product label into a list of associated compounds together with their concentrations where reported; each compound's identifier is then mapped to a canonical SMILES which can then be featurized.

\mypar{FartDB} ~\citep{zimmermann2025chemical} (Flavor Analysis and Recognition Transformer)\footnote{We use the GitHub release at \url{https://github.com/fart-lab/fart/tree/main/dataset}; a HuggingFace mirror is available at \url{https://huggingface.co/datasets/FartLabs/FartDB}.} is a publicly available collection of 15{,}025 canonicalized small molecules labeled with one of five taste classes: \textit{sweet}, \textit{bitter}, \textit{sour}, \textit{umami}, and \textit{undefined}. The dataset is curated from six public sources: FlavorDB~\citep{garg2017flavordb}, PlantMolecularTasteDB~\citep{gradinaru2022plantmolecular}, ChemTastesDB~\citep{rojas2021chemtastesdb}, Tas2R Agonists DB~\citep{bayer2021tas2r}, IUPAC Digitized Dissociation Constants~\citep{zheng2022iupac}, and Umami Compounds~\citep{suess2015umami}.

Molecules with multiple taste labels appear as separate entries. We use the original train/validation/test splits provided by the authors (10{,}517\,/\,2{,}254\,/\,2{,}254 molecules). The dataset is heavily skewed toward the sweet class, with umami severely underrepresented (53 training molecules, 6 test molecules). Our models are evaluated on the 2{,}254-molecule held-out test split for the molecular prediction task.

License terms and access restrictions for all datasets are documented in Appendix~\ref{app:dataset-licenses}. Croissant metadata for the NECTAR and Taste Like datasets, with Responsible AI fields, is available \href{https://github.com/Food-Intelligence-Lab/tastebench/blob/main/croissant/nectar.json}{here}.

\section{Benchmark Design}\label{sec:benchmark-design}
We evaluate predictive capabilities across two distinct scales: a macroscopic food-level ranking task that simulates human sensory panels, and a microscopic molecular classification task to test the limits of chemical proxies.

\begin{figure}[h!]
\centering
\includegraphics[width=\textwidth]{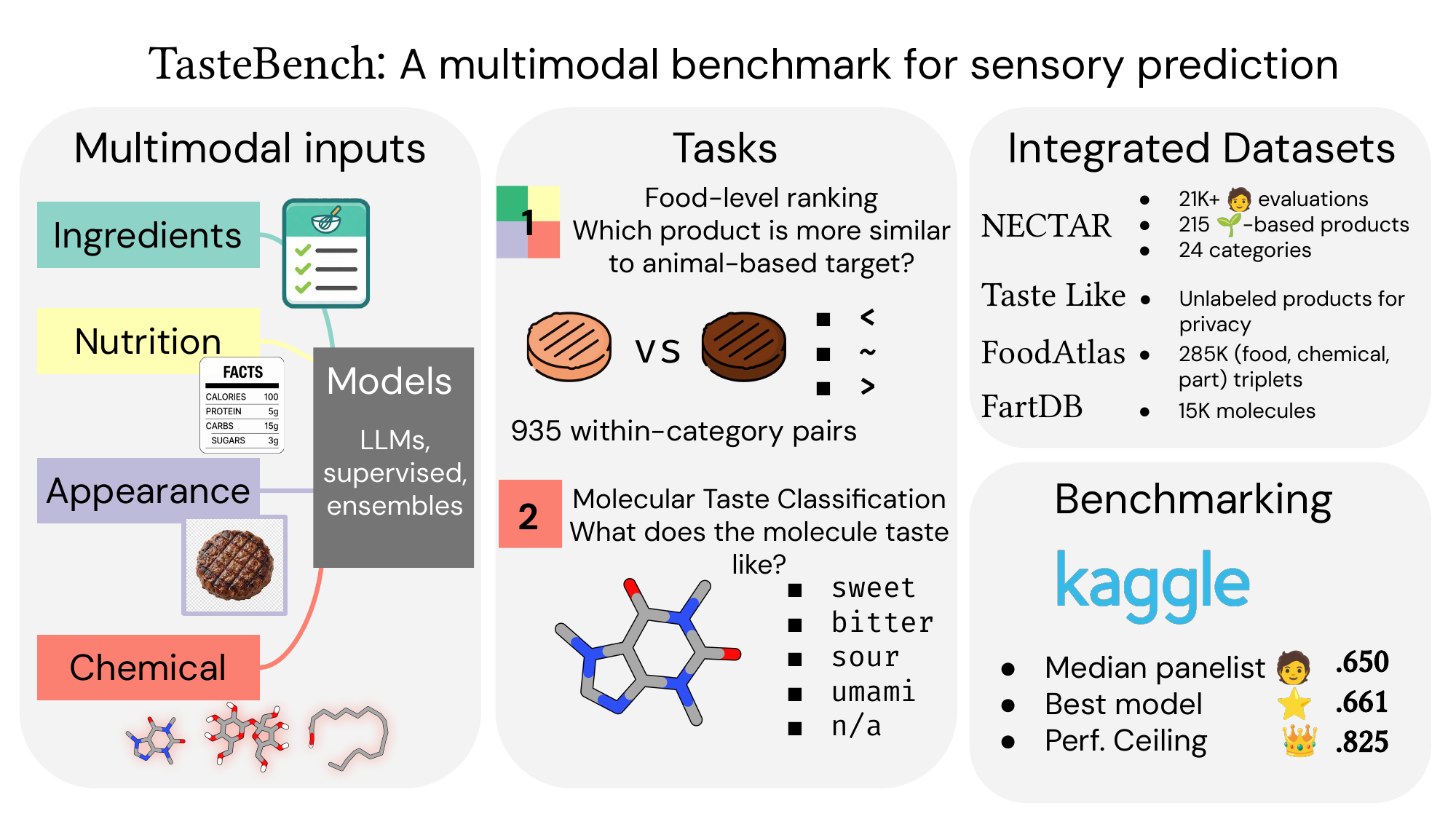}
\caption{Overview of \textsc{TasteBench}, a multimodal benchmark built on 21k+ human evaluations across 215 plant-based foods in 24 product categories, and 15K flavor molecules.}
\label{fig:main}
\end{figure}

\paragraph{Food-Level Similarity Prediction Task}
While NECTAR contains multiple sensory dimensions, here we focus on predicting similarity to the target animal product, a key driver of overall liking~\citep{giezenaar2024profiling}. Specifically, we frame the problem as a within-category ranking task: given a set of plant-based products in the same category, rank them by their mean panelist-rated similarity to the animal-based counterpart. We use multiple metrics to evaluate model performance: (1) Pairwise ranking accuracy; (2) Spearman $\rho$ and Kendall $\tau$ rank correlation (weighted by number of products per category); (3) Recall@$k$ ($k = 1, 2, 3$): fraction of categories where the true best product is in the top $k$ predictions (macro-averaged).

Our features span multiple cost tiers (Appendix Figure~\ref{fig:cost-hierarchy}): ingredient text, nutrition, and molecular composition require no physical product, while appearance requires making and photographing the product. Ground-truth sensory panels are the most expensive tier, ranging from \$2,100 for a 20-person university panel~\citep{uarizona_sensory} to \$18,000--\$85,000 for outsourced consumer tests~\citep{dataintelo2026sensory}.

\paragraph{Molecular-Level Taste Classification Task}

Given a canonical SMILES string, we predict whether the molecule is sweet, bitter, sour, umami, or undefined in this task. This is the same classification task performed by the chemical language model, FART (Flavor Analysis and Recognition Transformer) ~\citep{zimmermann2025chemical}. We extend their contribution by evaluating a different model (D-MPNN;~\cite{heid2023chemprop}) we train on the same data splits to understand how embeddings generated from different molecular taste classification models with comparable performance transfer to performance on food-level similarity predictions.

\paragraph{Privacy-Preserving Competition}\label{sec:privacy-preserving-competition}


To publicly compare the performance of different approaches on the food-level similarity prediction task, we create a test-only public Kaggle competition. Input features for the sustainable proteins are publicly available, and performance on the hidden test set is computed once a competitor uploads their predictions. A key consideration in this setting is preserving the privacy of companies whose products performed poorly in the NECTAR sensory testing. Thus, solely including features of NECTAR products would, in conjunction with the results of the TASTY Awards (given to the top performing products) enable identification of the brands that didn't win a TASTY Award.
We reduce this risk via also incorporating samples from the Taste Like dataset of sustainable proteins~\citep{nectarNECTARAcquires}, thus anonymizing the NECTAR products. The Kaggle competition is available \href{https://www.kaggle.com/competitions/tastebench-challenge-2026/}{here}. We discuss defenses against adversarial attacks in Appendix~\ref{app:privacy-details}. Similarly, to publicly compare the performance of different approaches on the molecular-level taste classification task, we create a fully public Kaggle competition, available \href{https://www.kaggle.com/competitions/molecular-taste-classification-2026/}{here}. This is the FartDB benchmark ~\citep{zimmermann2025chemical}, released on Kaggle as a public, transparent leaderboard. 
\paragraph{Access Structure and Reproducibility}\label{sec:access-structure}
\textsc{TasteBench} separates benchmark use from brand-identifying information. Both Kaggle competitions are public: participants can download input features, submit predictions, and receive scores without access to product identities. We also release the code, trained model weights, and other files needed to reproduce all reported results. Released model weights, out-of-fold predictions, cached features, and reproduction scripts are inventoried in Appendix~\ref{app:reproducibility}. The only controlled-access component is the linkage between de-identified product codes and underlying NECTAR product records, including brand identities. This linkage is not required to use the benchmark, submit to either competition, evaluate predictions, or reproduce results from the released feature files. It is needed only to audit the de-identification procedure or rebuild the benchmark from raw NECTAR records. Because participating companies provided data under anonymity constraints, this linkage is restricted. For reviewer verification, we provide a private access link \href{https://kaggle.com/datasets/e69c0803e40c6238c0a198930f5ec4eb069b88f1b15a085e5f0bcd364a9b47c3}{here} to the full data required to reproduce the results in this paper. 









\section{Baseline Models}\label{sec:baseline-models}

To predict food-level similarity, we extract features across four modalities: nutrition, appearance, ingredient text, and molecular composition. Because molecular chemistry is the most complex modality, we evaluate it as a standalone sub-task (Molecular Taste Classification) to determine if better chemical classifiers yield better downstream food-level representations.

\subsection{Features \& the Molecular Taste Sub-Task}\label{sec:food-level-features}

\paragraph{Nutrition}
We extend the nutrition features of~\citet{thomas2025what}, who used normalized fat and sodium, to a set of six macronutrients: total fat, sodium, protein, dietary fiber, total carbohydrate, and total sugars. These features were selected based on literature linking the nutrient and sensory properties in each relevant product type (Appendix~\ref{app:nutrition-selection}). All values are normalized to per-100g serving and converted to grams. In our supervised models, all six columns are provided to every product and the model learns which are relevant per category; missing values are imputed with zero. For the unsupervised distance baselines, only the category-specific subset is used. LLM-based models receive the full nutrition facts panel as formatted text in the prompt.

\paragraph{Ingredient Text}

To generate a text embedding of the ingredients, we embed the full ingredient list as a single string using Qwen3-Embedding-0.6B. We prepend a task-specific instruction (''Given a food ingredient list, identify products with similar taste, texture, and sensory properties'') following the model's instruct format. This produces a 1024-dimensional food-level embedding.

\paragraph{Category Subset}

We group the 24 NECTAR product categories into four subsets that share sensory-relevant macronutrients: \textit{meat}, \textit{nonsweet dairy}, \textit{cheese}, and \textit{sweet dairy} (full mapping in Appendix~\ref{app:category-subsets}). Per-category sample sizes are too small for category-specific modeling (as few as 5 products per category), so each product is encoded as one dimension of a 4-dimensional one-hot indicator over these subsets. In our supervised models, this lets the model learn product-type-specific coefficients without per-category overfitting; in our unsupervised distance baselines, the corresponding subset additionally selects which macronutrient features to use (Appendix~\ref{app:nutrition-selection}).

\paragraph{Appearance}
To generate an appearance embedding for each image, we use the DINOv3 vision transformer model \citep{oquab2024dinov}. We extract the CLS token from the last hidden state to generate 1024-dimensional embeddings. 93\% of analog products have an image available; for the rest, missing image embeddings are imputed from the $k{=}5$ nearest neighbors within the same category, where neighbor distance is computed via cosine similarity over the available (non-image) features.

\paragraph{Chemical Composition \& Molecular Taste Classification Task}

We extract the molecular composition of each ingredient using the FoodAtlas knowledge graph, resolving compounds to canonical SMILES via PubChem or ChEBI. We evaluate two distinct methods for embedding these molecules. First, we default to extracting 768-dimensional embeddings from the CLS token of the FART chemical language model \citep{zimmermann2025chemical}. These compound-level embeddings are aggregated to the food level via a log-concentration-weighted mean (weights $w_i \propto \log(1 + c_i)$). While complex attention-based aggregators are common~\citep{NEURIPS2025_2f803abd}, recent chemosensory benchmarks demonstrate that in high-noise psychophysical datasets~\citep{Satarifard2025-lf}, simple log-linear aggregations resist catastrophic overfitting and effectively mimic biological receptor competitive masking, aligning with the Weber-Fechner law~\citep{DEHAENE2003145}. Ingredient-level embeddings are then averaged across the top three ingredients by FDA label weight.

To determine if improving molecular taste classification yields better downstream food-level rankings, we evaluate a standalone sub-task by training a directed message-passing network (D-MPNN; \citealt{heid2023chemprop}) from scratch. We parse SMILES into graphs using Chemprop’s default atom and bond featurizer, and train the D-MPNN on the same five-class taste classification task (sweet, bitter, sour, umami, undefined) as FART. The model mean-pools atom representations into a 300-dimensional graph embedding. Details are in Appendix \ref{app:gnn-training}. This trained GNN establishes our molecular-task baseline (Table~\ref{tab:molecular-prediction}) and provides alternative 300-dimensional embeddings that we swap with FART to test cross-resolution transfer in our downstream product models (Table~\ref{tab:encoder-transfer}). We also add a third encoder to both comparisons, a ChemBERTa-2 chemical language model fine-tuned on the same split.

\subsection{Food Similarity Models}
\label{sec:models_food_sim}

After generating features, we use the models described below to predict the mean similarity rating for each plant-based protein to its animal-based counterpart.

\mypar{Multi-modal Rank Fusion (MMRF)} For each product category, we first compute the mean feature vector of animal-based reference products per modality (nutrition, compound, image). We then compute each plant-based product's distance to its category's reference centroid, normalize via within-category percentile ranks, and average ranks across modalities to produce a final similarity ranking within each category. We evaluate both cosine and Euclidean distance variants.

\mypar{LLMs}
We tested Gemini 3.1 Pro and Qwen 3.5 397B-A17B, leading closed-source and open-source multimodal foundation models, to understand how they perform with zero-shot prompting. The prompt is in Appendix~\ref{app:llm-prompts}. Ingredients, nutrition, and images are used as inputs in the prompts.

\mypar{Supervised Models} We train supervised models via leave-one-product-out cross-validation (LOOCV) across all 215 plant-based products. All supervised models receive the same concatenated feature vector (product category subset, nutrition proportions, compound aggregations, text embeddings, and image embeddings) with per-modality standardization and dimensions reduced via PCA to 95\% explained variance for the high-dimensional modalities (both fit on training data within each LOOCV fold). We evaluate five models: ridge regression; Bradley-Terry, which fits logistic regression on within-category feature-pair differences; Hierarchical Bradley-Terry, which fits per-category-subset models with empirical Bayes shrinkage toward the global coefficients; Kernel RankSVM with an RBF kernel on pair differences; and LightGBM with a pointwise MSE objective.

\mypar{Ensembles} We combine the best individual model (Gemini 3.1 Pro) with the supervised model (Bradley-Terry) that yields the most improvement in accuracy; Appendix~\ref{app:architecture-diagram} traces a single product through this full pipeline end-to-end. A comparison of the performance improvement of each supervised model when combined with Gemini can be seen in Appendix~\ref{app:per-model-nnls}.

We use nested leave-one-out cross-validation to prevent label leakage: for each held-out product, inner-loop Bradley-Terry predictions are trained excluding both the held-out product and the evaluation partner. Our results report three combination strategies: non-negative least squares (NNLS), rank averaging, and arithmetic mean.

\section{Experiments and Results}

\subsection{Ground truth reliability and human performance}\label{sec:human-baseline}

To interpret model performance, we benchmark against human panelists,
following~\cite{lee2023principal}. NECTAR recruits untrained consumer panelists using
a balanced incomplete block design: each panelist rates a subset of products within a
category per session, across two sessions on separate days. We compute each panelist's pairwise ranking accuracy against the leave-one-out panel
mean, restricted to the products they rated. The median panelist achieves .650
accuracy (IQR .50--.75, $n = 627$), with low inter-rater agreement (Krippendorff's
$\alpha = .077$). To compare on the same pair set, we evaluate our best model (described in Section~\ref{sec:ensembles-exceed}) on
within-block pairs, where it achieves $.661$, exceeding the median panelist by
$1.1$\,pp; on all within-category pairs (Table~\ref{tab:results}), it achieves
$.683$, matching approximately $k^* \approx 2$ panelists ($p < .001$).
Figure~\ref{fig:human-baseline} shows accuracy as a function of panel size. The
split-half reliability of the panel mean ranking ($.825$) represents an approximate
ceiling for model performance.

\begin{figure}[h!]
\centering
\begin{minipage}{0.58\columnwidth}
  \includegraphics[width=\linewidth]{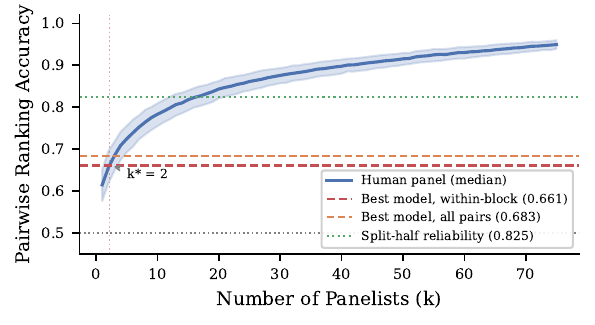}
\end{minipage}\hfill
\begin{minipage}{0.40\columnwidth}
  \scriptsize
  \captionof{table}{\textbf{Human-panelist baseline.} Pairwise ranking accuracy against the leave-one-out panel mean on within-block pairs. The meaningful band for models runs from .650 (median panelist) to .825 (split-half reliability ceiling). $n$ = 627 panelists, 4{,}664 panelist-block observations.}
\label{tab:human-baseline}
\begin{tabular}{@{}lc@{}}
\toprule
Method & Pairwise Acc.\ [95\% CI] \\
\midrule
\multicolumn{2}{@{}l}{\textit{Evaluated on within-block pairs}} \\
\quad Individual panelist (median) & .650\textsuperscript{\dag} \\
\quad Panel of 3 & .685 [.640,.725] \\
\quad Panel of 5 & .725 [.686,.760] \\
\quad Best model (within-block) & .661 [.604,.716] \\
\midrule
\multicolumn{2}{@{}l}{\textit{Evaluated on all within-category pairs}} \\
\quad Best model (all pairs) & .683 [.653,.740] \\
\bottomrule
\end{tabular}

\vspace{2pt}
{\scriptsize \textsuperscript{\dag}IQR: 0.50--0.75 across 4{,}664 panelist-block observations.}

\end{minipage}
\mycaption{Models perform competitively with small human panels}{Pairwise accuracy vs.\ panel size (left) and discrete comparison (right, Table~\ref{tab:human-baseline}).}
\label{fig:human-baseline}
\end{figure}

\subsection{Ensembles exceed median human similarity matching performance}\label{sec:ensembles-exceed}

The results for our computational baselines are shown in Table~\ref{tab:results}. We report 95\% BCa bootstrap confidence intervals (10{,}000 resamples, stratified by category)~\citep{efron1987better}. The best ensemble (NNLS over Bradley-Terry and Gemini 3.1 Pro) achieves $.683$ pairwise accuracy [$.654$, $.740$], exceeding both the strongest individual model (Gemini 3.1 Pro at $.654$) and the median untrained human panelist ($.650$); this corresponds to matching approximately $k^{*} \approx 2$ panelists (Section~\ref{sec:human-baseline}).

Among individual models, the leading LLM (Gemini 3.1 Pro, $.654$) and the strongest supervised model (Kernel RankSVM, $.618$) are within their CIs of each other, while distance baselines ($.571$--$.574$) and LightGBM ($.558$) lag substantially. NNLS exceeds either component alone ($.683$ vs.\ $.654$ / $.610$), indicating that the LLM and supervised model capture complementary signal; rank-averaging and arithmetic-mean variants recover most but not all of this gain ($.671$ and $.623$ respectively).

Hierarchical Bradley-Terry recovers the true best product in its top-3 predictions in $67\%$ of categories (R@3 = .667) and predicts the true top product in $33\%$ of categories (R@1 = .333 vs.\ .121 random). Per-modality and per-feature ablations are reported in Appendix~\ref{app:add-results}. Because category sizes are uneven (5--18 products), we also report macro-averaged and partial-pooling estimates in Appendix~\ref{app:category-balanced}; the best model ranks first under all three schemes.

Many resource-constrained brands rely on feedback from 3--5 untrained internal participants rather than formal consumer panels~\citep{nolden2025sensory}. A model competitive with the median untrained panelist could provide a low-cost screening signal where infrastructure for formal panels is unavailable.

\begin{table}[t]
\centering
\scriptsize
\setlength{\tabcolsep}{2pt}
\renewcommand{\arraystretch}{0.92}
\caption{
\textbf{Ensembles of supervised models and LLMs outperform the median human panelist.}
Ranking metrics on 215 NECTAR plant-based products (935 within-category pairs; LOOCV) with 95\% BCa CIs. \textbf{Bold} = best in column; $^\dag$ = CI overlaps the leader (no significant difference at 95\%)
}
\label{tab:results}
\resizebox{\textwidth}{!}{%
\begin{tabular}{@{}lcccccc@{}}
\toprule
Model & Pw.\ Acc. & $\rho_{\mathrm{S}}$ & $\tau_{\mathrm{K}}$ & R@1 & R@2 & R@3 \\
\midrule
Gemini 3.1 Pro & .654$^\dag$ [.619,.713] & .370$^\dag$ [.250,.509] & .298$^\dag$ [.179,.409] & .271$^\dag$ [.167,.424] & \textbf{.514} [.476,.701] & .578$^\dag$ [.491,.786] \\
Qwen 3.5 397B & .630$^\dag$ [.591,.688] & .320$^\dag$ [.203,.459] & .255$^\dag$ [.141,.365] & .229$^\dag$ [.132,.354] & .354$^\dag$ [.215,.483] & .500$^\dag$ [.375,.656] \\
MMRF (cosine) & .574 [.528,.635] & .212$^\dag$ [.077,.380] & .151$^\dag$ [.023,.278] & .208$^\dag$ [.104,.323] & .292 [.135,.396] & .333 [.181,.379] \\
MMRF (L2) & .571 [.524,.632] & .202$^\dag$ [.064,.358] & .153$^\dag$ [.031,.277] & .222$^\dag$ [.104,.335] & .403$^\dag$ [.285,.576] & .542$^\dag$ [.438,.729] \\
\midrule
Ridge & .579 [.534,.632] & .160 [.028,.302] & .115$^\dag$ [$-.005$,.226] & .250$^\dag$ [.181,.435] & .292 [.118,.396] & .542$^\dag$ [.493,.749] \\
Bradley--Terry & .610$^\dag$ [.571,.661] & .266$^\dag$ [.156,.402] & .178$^\dag$ [.064,.287] & .250$^\dag$ [.181,.422] & .292 [.125,.396] & .375 [.187,.458] \\
Hierarchical BT & .613$^\dag$ [.570,.670] & .228$^\dag$ [.098,.383] & .165$^\dag$ [.043,.283] & \textbf{.333} [.271,.526] & .500$^\dag$ [.424,.742] & \textbf{.667} [.622,.854] \\
Kernel RankSVM & .618$^\dag$ [.579,.674] & .288$^\dag$ [.168,.438] & .206$^\dag$ [.088,.321] & .167 [.040,.250] & .292 [.115,.396] & .417 [.196,.521] \\
LightGBM & .558 [.509,.615] & .136 [$-.003$,.293] & .092$^\dag$ [$-.032$,.219] & .292$^\dag$ [.250,.438] & .333$^\dag$ [.208,.491] & .458$^\dag$ [.333,.646] \\
\midrule
NNLS (BT+Gemini) & \textbf{.683} [.654,.740] & \textbf{.417} [.314,.550] & \textbf{.322} [.211,.427] & .292$^\dag$ [.188,.451] & .500$^\dag$ [.424,.693] & .542$^\dag$ [.385,.688] \\
Rank avg. & .671$^\dag$ [.641,.725] & .398$^\dag$ [.293,.537] & .304$^\dag$ [.192,.409] & .264$^\dag$ [.182,.427] & .361$^\dag$ [.211,.486] & .521$^\dag$ [.375,.677] \\
Mean & .623$^\dag$ [.587,.675] & .310$^\dag$ [.201,.443] & .216$^\dag$ [.102,.321] & .167 [.035,.250] & .375$^\dag$ [.260,.549] & .417 [.219,.500] \\
\midrule
Random (analytical) & .500 & .000 & .000 & .121 & .242 & .363 \\
\bottomrule
\end{tabular}%
}
\end{table}

\subsection{Molecular-classification quality does not transfer to food-level ranking}\label{sec:transfer}


\begin{table}[t]
\centering
\scriptsize
\setlength{\tabcolsep}{3pt}
\renewcommand{\arraystretch}{0.95}
\caption{Taste classification on the FART test split (2{,}254 molecules). Baselines from~\citet{zimmermann2025chemical}; 95\% BCa CIs (10{,}000 resamples). \textbf{Bold} = best; $^\dag$ = CI overlaps leader. Per-class breakdown in Table~\ref{tab:molecular-prediction-per-class}.}
\label{tab:molecular-prediction}
\begin{tabular}{@{}lccccc@{}}
\toprule
Model & Accuracy & Precision & Recall & F1 & AUROC \\
\midrule
\multicolumn{6}{@{}l}{\textit{Reported by~\citet{zimmermann2025chemical}}} \\
\quad XGBoost (fp) & .899 & .817 & .740 & .766 & .852 \\
\quad XGBoost (fp+desc) & .896 & \textbf{.884} & .740 & .778 & .851 \\
\quad Balanced RF (fp) & .797 & .585 & .732 & .601 & .839 \\
\quad FART & .894$^\dag$ [.880,.906] & .879 [.675,.897] & .739$^\dag$ [.668,.869] & .774$^\dag$ [.671,.875] & .974$^\dag$ [.968,.979] \\
\midrule
\multicolumn{6}{@{}l}{\textit{This work}} \\
\quad GNN & \textbf{.904} [.890,.915] & .825 [.697,.895] & \textbf{.813} [.683,.883] & \textbf{.818} [.732,.876] & \textbf{.978} [.968,.982] \\
\quad ChemBERTa-2 (fine-tuned) & .842 [.827,.857] & .643 [.590,.709] & .710$^\dag$ [.635,.802] & .671 [.609,.731] & .950 \\
\bottomrule
\end{tabular}
\end{table}

On the held-out FART test split (2{,}254 molecules), the GNN achieves an accuracy of $.904$ [$.890$, $.915$] compared to FART's $.894$ [$.880$, $.906$]. The gap is not significant (McNemar $p{=}0.07$). A third encoder, ChemBERTa-2, fine-tuned on the same split scores $.842$ [.827,.857], significantly below both (per-class breakdown in Appendix~\ref{app:molecular-per-class}).

\textbf{Transferring molecular representations to food-level ranking.} We replace the compound block with each encoder's embeddings across all non-LLM models, holding everything else fixed. As shown in Table~\ref{tab:encoder-transfer}, downstream pairwise accuracy is statistically indistinguishable across the three encoders, which span .84--.90 on the molecular task. This decoupling suggests that molecular taste classification is a limited proxy for the compound interactions that drive product-level preferences. We use FART in the main pipeline because it requires no training.

\begin{table}[t]
\centering
\footnotesize
\setlength{\tabcolsep}{4pt}
\renewcommand{\arraystretch}{0.95}
\caption{\textbf{Molecular-encoder quality does not transfer to product-level ranking.} Pairwise accuracy (935 within-category pairs, LOOCV), compound block swapped, all else fixed. 95\% BCa CIs (10{,}000 resamples). $\Delta$ relative to FART; all lie inside their CIs.}
\label{tab:encoder-transfer}
\begin{tabular}{lccccc}
\toprule
Model & C $=$ FART & C $=$ GNN & C $=$ ChemBERTa-2 & $\Delta_{\text{GNN}}$ & $\Delta_{\text{Chem}}$ \\
\midrule
\multicolumn{6}{@{}l}{\emph{Unsupervised}} \\
\quad MMRF (cosine) & \textbf{.574} {\scriptsize [.528,.635]} & .566 {\scriptsize [.518,.624]} & .573 {\scriptsize [.526,.633]} & $-$.008 & $-$.002 \\
\quad MMRF (L2) & \textbf{.571} {\scriptsize [.524,.632]} & .563 {\scriptsize [.518,.619]} & .568 {\scriptsize [.522,.628]} & $-$.008 & $-$.002 \\
\multicolumn{6}{@{}l}{\emph{Supervised -- linear}} \\
\quad Ridge & .579 {\scriptsize [.534,.632]} & .561 {\scriptsize [.513,.614]} & \textbf{.598} {\scriptsize [.554,.655]} & $-$.018 & $+$.019 \\
\multicolumn{6}{@{}l}{\emph{Supervised -- pairwise}} \\
\quad Bradley--Terry & .610 {\scriptsize [.571,.661]} & .617 {\scriptsize [.578,.671]} & \textbf{.640} {\scriptsize [.604,.693]} & $+$.006 & $+$.030 \\
\quad Hierarchical BT & .613 {\scriptsize [.570,.670]} & .630 {\scriptsize [.588,.689]} & \textbf{.639} {\scriptsize [.599,.698]} & $+$.017 & $+$.026 \\
\quad Kernel RankSVM & .618 {\scriptsize [.579,.674]} & \textbf{.627} {\scriptsize [.589,.682]} & .610 {\scriptsize [.570,.664]} & $+$.010 & $-$.007 \\
\multicolumn{6}{@{}l}{\emph{Supervised -- nonlinear}} \\
\quad LightGBM & \textbf{.558} {\scriptsize [.509,.615]} & .515 {\scriptsize [.459,.571]} & .535 {\scriptsize [.483,.592]} & $-$.043 & $-$.022 \\
\multicolumn{6}{@{}l}{\emph{Ensemble (BT + Gemini)}} \\
\quad NNLS & .683 {\scriptsize [.654,.740]} & .673 {\scriptsize [.641,.730]} & \textbf{.684} {\scriptsize [.655,.737]} & $-$.010 & $+$.001 \\
\quad Rank average & .671 {\scriptsize [.641,.725]} & .671 {\scriptsize [.639,.725]} & \textbf{.685} {\scriptsize [.656,.743]} & $+$.000 & $+$.014 \\
\quad Mean & .623 {\scriptsize [.587,.675]} & .635 {\scriptsize [.599,.691]} & \textbf{.641} {\scriptsize [.606,.695]} & $+$.012 & $+$.018 \\
\bottomrule
\end{tabular}
\end{table}

\section{Discussion}\label{sec:discussion}

\textsc{TasteBench} provides evaluation infrastructure for predicting how sustainable
proteins will perform in sensory evaluation. Our baselines establish
that current models can exceed the median untrained panelist on pairwise ranking,
while the reliability ceiling (.825) provides an upper bound for future
progress. Even at current accuracy, these models are practically useful as screening tools (Appendix~\ref{sec:theoretical}). In our molecular prediction experiments, three encoders spanning six points of
molecular-classification accuracy give statistically indistinguishable food-level
ranking, suggesting that improving molecular classifiers alone will not close the
performance gap. 

\paragraph{The modality gap and data contamination.} Our baselines reveal that zero-shot multi-modal LLMs often outperform dedicated molecular representations. However, this advantage might stem from ``data contamination": LLMs having ingested public CPG databases and recipe co-occurrences during pre-training. We audit this in Appendix~\ref{app:recognition-audit}. The model's reasoning identifies specific products in about 12\% of pairs without higher accuracy, and an interventional brand-blind control changes accuracy by only $-.005$ ($p = .71$), bounding the latent contribution. Because text and images cannot fully capture physical texture, taste or olfaction, advancing chemical mixture models remains essential to overcome this data scarcity and discover truly out-of-distribution formulations.

\paragraph{A seed benchmark for the community.} By establishing the human reliability ceiling, \benchname provides a calibrated framework to measure algorithmic progress and invites the community to augment these tasks with large-scale web data to further close the gap to human perception.

\paragraph{Societal impact.}
Sensory prediction could accelerate sustainable protein development, with downstream
benefits for emissions, land use, and water use. However, biased models could narrow
the formulation design space, and overreliance on screening before models are
sufficiently reliable could lead to premature elimination of promising candidates.
The human performance baseline and reliability ceiling serve as partial safeguards
against overconfidence.

\paragraph{Limitations.}
The benchmark covers meat and dairy only, though will be updated as NECTAR expands to eggs and seafood. The panel is a screened, two-city US consumer sample rather than a
nationally representative one; generalization to other markets or to trained panelists
is untested. Per-category sample sizes are
small (median 36 pairs, but as few as 10), producing wide category-level confidence
intervals; primary metrics are computed across all 935 within-category pairs. Our
evaluation focuses on the similarity dimension, and other sensory dimensions may require
different modeling approaches. We have not yet established an expert food scientist
baseline or explored prompt engineering for the foundation models.

\bibliography{example_paper}
\bibliographystyle{icml2026}

\appendix
\raggedbottom
\clearpage 

\section{Related Work}\label{app:related-work}

\paragraph{Taste Prediction.} \citet{zimmermann2025chemical} introduced FART, a chemical language model fine-tuned on 15K molecular tastants for multi-class taste prediction across sweet, bitter, sour, and umami categories, achieving over 91\% accuracy. Prior to FART, molecular taste prediction relied on binary classifiers specialized to individual taste classes, using representations such as Morgan fingerprints or molecular descriptors paired with tree-based models. \citet{thomas2025what} evaluated several LLMs on sensory prediction tasks constructed from an earlier version of the NECTAR dataset, finding that LLMs can exceed human food scientist performance on experimental design for plant-based meats as judged by expert evaluators. However, that work considered only text-based inputs and did not attempt to improve upon foundation model performance with learned representations. \textsc{TasteBench} extends both lines of work: it uses FART's molecular representations as one feature modality while introducing a food-level ranking task that connects molecular-level predictions to product-level sensory outcomes, a link that no prior work has established.

\paragraph{Olfactory Prediction.} Computational olfaction provides the closest methodological precedent for our work.~\citet{lee2023principal} trained a graph neural network to construct a principal odor map (POM) that predicted odor qualities of novel odorants from molecular structure; on a prospective validation set of 400 molecules, the model matched the panel mean more closely than the median human panelist, establishing the benchmarking paradigm we adopt for our human performance baseline (Section~\ref{sec:human-baseline}). ~\citet{tom2025molecules} extended POM to molecular mixtures with POMMix, using attention mechanisms to aggregate single-molecule GNN embeddings into mixture representations. This molecule-to-mixture aggregation problem is structurally analogous to our compound-to-food aggregation pipeline (Section~\ref{sec:food-level-features}), though we operate in the taste rather than olfactory domain and aggregate across ingredients rather than mixture components. ~\citet{feng2026smellnet} introduced SmellNet, a large-scale sensor-based benchmark for real-world odor classification with 180K time steps across 50 substances, providing a benchmarking template for chemosensory perception that complements our label-based approach.

\section{Theoretical Analysis}\label{sec:theoretical}
A natural question is: how good must the sensory prediction model be in order to be useful? In a simplified model, we can use properties of the truncated bivariate normal to begin to understand the relationship between performance of the predictive model and downstream product improvements. Let $S$ be the property of interest, e.g. sensory similarity to the target animal product, and let $P$ be the predicted value, with $S \sim \mathcal{N}(\mu_S, \sigma^2_{S})$, and $P \sim \mathcal{N}(\mu_P, \sigma^2_{P})$.

The quantity we want to understand is $\frac{E[S | P > T] - E[S]}{E[S]} = \frac{E[S | P > T] - \mu_S}{\mu_S}$: the percent improvement in the target attribute when using our predictive model to select a subset of candidate formulations.

\begin{proposition}\label{main-prep}
$\frac{E[S | P > T] - \mu_S}{\mu_S} = \frac{\sigma_{S}\rho}{\mu_S} \frac{\phi(\frac{T - \mu_{P}}{\sigma_{P}})}{1 - \Phi(\frac{T - \mu_{P}}{\sigma_{P}})}$, where $\Phi(x)$ is the standard normal CDF, $\phi(x)$ is the standard normal PDF, and $\rho$ is the correlation between $S$ and $P$.
\end{proposition}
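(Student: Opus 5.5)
The key assumption, implicit in the statement, is that $(S,P)$ is jointly bivariate normal with correlation $\rho$, not merely marginally normal. The plan is to reduce the conditional expectation $E[S\mid P>T]$ to a one-dimensional truncated-normal computation in $P$, using the linear regression structure of the bivariate normal. Once we have $E[S\mid P>T]-\mu_S$, dividing by $\mu_S$ gives the claimed percent improvement. So the whole content is the identity
\[
E[S\mid P>T]=\mu_S+\sigma_S\rho\,\frac{\phi(\alpha)}{1-\Phi(\alpha)},\qquad \alpha=\frac{T-\mu_P}{\sigma_P}.
\]

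\emph{Step 1 (regression decomposition).} Write
\[
S=\mu_S+\rho\frac{\sigma_S}{\sigma_P}(P-\mu_P)+\varepsilon,
\]
where $\varepsilon$ is normal with mean zero. The variable $\varepsilon$ is uncorrelated with $P$, since
\[
\operatorname{Cov}(S,P)-\rho\frac{\sigma_S}{\sigma_P}\sigma_P^2=0.
\]
Because $(\varepsilon,P)$ is jointly Gaussian, $\varepsilon$ is therefore independent of $P$. Hence $E[\varepsilon\mid P>T]=0$ (assuming $P(P>T)>0$, which always holds for a nondegenerate normal), and
\[
E[S\mid P>T]=\mu_S+\rho\,\sigma_S\,E\!\left[Z\mid Z>\alpha\right],\qquad Z=\frac{P-\mu_P}{\sigma_P}\sim\mathcal N(0,1).
\]

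\emph{Step 2 (inverse Mills ratio).} Compute
\[
E[Z\mid Z>\alpha]=\frac{1}{1-\Phi(\alpha)}\int_\alpha^\infty z\,\phi(z)\,dz .
\]
Using $\phi'(z)=-z\phi(z)$, the integral equals $\phi(\alpha)$. Substituting into Step 1 and subtracting $\mu_S$, then dividing by $\mu_S$ (assumed nonzero), yields the proposition.

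The main obstacle is conceptual rather than computational. Marginal normality of $S$ and $P$, as literally stated, does not suffice: the conditional mean of $S$ given $P$ need not be linear, and then the formula can fail. I would therefore state explicitly at the start that $(S,P)$ is assumed bivariate normal, matching the ``truncated bivariate normal'' framing in the text. With that assumption, the independence of $\varepsilon$ and $P$ in Step 1 is the only nontrivial ingredient, and everything else is the standard truncated-normal mean.
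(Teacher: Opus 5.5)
Your proposal is correct and follows the same route as the paper. The paper also standardizes $S$ and $P$ and applies the truncated bivariate normal identity $E[X\mid Y>t]=\rho\,\phi(t)/(1-\Phi(t))$, but cites it, whereas you derive it from the regression decomposition and the inverse Mills ratio; your explicit joint-normality assumption is exactly what that cited result implicitly requires.
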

\begin{proof}
The proposition follows from standardizing $S$ and $P$ and applying a result from~\cite{maddala1983limited} that $E[X | Y > T] = \rho \frac{\phi(T)}{1 - \Phi(T)}$ for standard bivariate normals $X$ and $Y$ with correlation $\rho$.
\end{proof}



For example, in the burgers category the $\frac{\sigma_S}{\mu_S}$ ratio was 0.19, though this is the variance of products on the market rather than an internal candidate pool. With a correlation between the predictor and the true property of 0.81 (both values from the NNLS ensemble; see Table~\ref{tab:per-category-nnls} for per-category inputs), and taking the top 5\% of candidates, this translates to a percent improvement of 32\% in the target property when using the predictive model. This  increases linearly as the variance of the pool of candidates and the correlation between predictive model and true property increase. This number is illustrative; the realized improvement on a real internal pool depends on its variance, which we cannot observe.

\section{Experimental Details}\label{app:exp-details}

\subsection{Cost Hierarchy}\label{app:cost-hierarchy}

Figure~\ref{fig:cost-hierarchy} visualizes the hierarchy of cost in sustainable protein evaluation. We study whether lower-cost features can predict the gold standard evaluation of a large-scale, representative sensory panel. By benchmarking lower-cost computational proxies (e.g., zero-shot LLMs) against the highest-cost human ground truth, \textsc{TasteBench} assesses the viability of these novel evaluation methodologies.

\begin{figure}[h!]
\centering
\includegraphics[width=\textwidth]{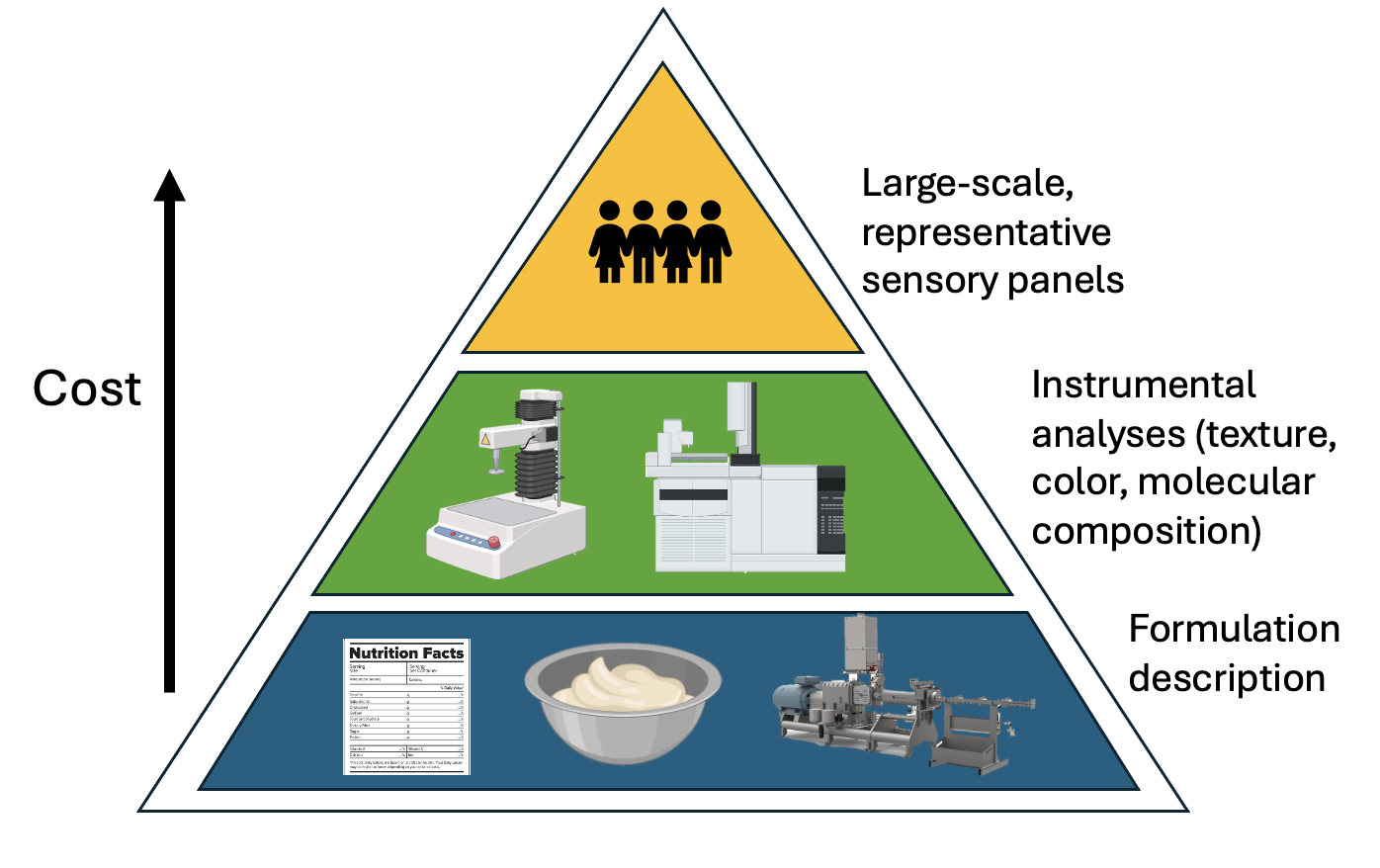}
\caption{Hierarchy of cost in sustainable protein evaluation.}
\label{fig:cost-hierarchy}
\end{figure}

\subsection{Privacy-Preserving Kaggle Competition}\label{app:privacy-details}
The most harmful attack is fine-grained pairwise inference: an attacker submits rankings that swap two specific products and reads the panel's ordering from the resulting leaderboard delta of $\pm\frac{1}{935} \sim 0.001$. Two layers of defense raise the cost of this attack. First, leaderboard scores are rounded to two decimal places, an order of magnitude coarser than the single-pair signal, and participants are restricted to five submissions per day. 
Second, Kaggle's public/private leaderboard split places 11\% of pairs in a private holdout that is not probeable during the competition and is frozen afterward, so an attacker cannot adaptively target a specific brand's product.  

\subsection{Dataset Licenses and Terms of Use}\label{app:dataset-licenses}
\begin{itemize}[itemsep=2pt, topsep=2pt, parsep=0pt]
    \item \textbf{NECTAR sensory ratings, ingredient/nutrition CSVs,
      and product images.} Licensed under Creative Commons
      Attribution-NonCommercial 4.0 International (CC BY-NC 4.0) by
      NECTAR, with access gated under NECTAR's NDA. We provide a private access link \href{https://kaggle.com/datasets/e69c0803e40c6238c0a198930f5ec4eb069b88f1b15a085e5f0bcd364a9b47c3}{here} for reviewer verification. Other researchers can request access via the Google Form linked in Section~\ref{sec:datasets}.

    \item \textbf{Taste Like CPG product directory.} This dataset has been acquired by NECTAR
    and can be accessed by the same private access link \href{https://kaggle.com/datasets/e69c0803e40c6238c0a198930f5ec4eb069b88f1b15a085e5f0bcd364a9b47c3}{here} and requested via the \href{https://docs.google.com/forms/d/e/1FAIpQLSdYXkB1yEOt_gMBrnHIpHv_xX9peh5NS1kdbfuY8-6-77kSug/viewform?usp=preview}{Google Form} linked in Section~\ref{sec:datasets}.

    \item \textbf{FoodAtlas v$4.0$.} Licensed under the Apache License
      2.0. 


    \item \textbf{FartDB (FART train/val/test splits).} Licensed
      under the MIT License. Pinned to commit
      \texttt{bde90e6562ce5d248e76af791fab29ffc9ae901b} of
      \url{https://github.com/fart-lab/fart}; vendored at
      \texttt{molecular/data/splits/}. The upstream project requests
      citation of \citet{zimmermann2025chemical} for any work using
      the dataset.
\end{itemize}

\subsection{LLM prompt}\label{app:llm-prompts}

\begin{figure}[H]
\begin{tcolorbox}[colback=lightyellow,
  colframe=yellow, coltitle=black,
  title=System prompt]
\scriptsize
\begin{verbatim}
<role>
You are an expert food scientist.
Your task is to predict which of two
plant-based products would be perceived
as more similar to its animal-based
counterpart in a blind taste test.
</role>

<output_format>
Respond with ONLY the number 1 or 2.
No other text.
</output_format>
\end{verbatim}
\end{tcolorbox}

\vspace{2pt}

\begin{tcolorbox}[colback=lightyellow,
  colframe=yellow, coltitle=black,
  title=User prompt (all modalities)]
\scriptsize
\begin{verbatim}
<context>
Two plant-based {category} products
are compared for similarity to their
animal-based {category} counterpart.
Consider ingredients, nutrition facts,
and visual appearance.
</context>

<data>
<product id="1">
<ingredients order="descending_by_weight">
{ingredient list 1}
</ingredients>
<nutrition unit="per_100g">
{nutrition facts 1}
</nutrition>
</product>

<product id="2">
(same structure)
</product>
</data>

<images>
The first image shows Product 1 and
the second image shows Product 2.
</images>

<task>
In a blind taste test, which product
would omnivores rank as more similar
to the animal-based {category}?
</task>
\end{verbatim}
\end{tcolorbox}
\caption{Prompt template for LLM-based
models. Feature ablation removes the
corresponding XML blocks.}
\label{fig:llm-prompt}
\end{figure}

\subsection{Nutrition feature selection}\label{app:nutrition-selection}

We include only nutrition label features where nutritional distance is a valid proxy for sensory distance, selecting subsets based on the dominant sensory drivers in each product type
(Table~\ref{tab:nutrition-features}).

\begin{table}[H]
\centering
\small
\caption{Nutrition features by category group. All values
normalized to per-100g serving.}
\label{tab:nutrition-features}
\begin{tabular}{@{}ll@{}}
\toprule
\textbf{Group} & \textbf{Features} \\
\midrule
Meat (14 cat.) & Total Fat, Sodium, Protein, Dietary Fiber \\
Non-Sweet Dairy (6 cat.) & Total Fat, Sodium \\
Cheese (2 cat.) & Total Fat, Sodium, Total Carbs \\
Sweet Dairy (2 cat.) & Total Fat, Total Sugars \\
\bottomrule
\end{tabular}
\end{table}

Feature selections are supported by: total fat, sodium, protein, and dietary fiber
in meat
analogs~\citep{thomas2025what,godschalk2022meat,bakhsh2021methylcellulose};
fat and sodium in non-sweet
dairy~\citep{magwere2025comparative,liem2011sodium}; total fat, sodium, and total carbohydrate
in
cheese~\citep{rune2026plantcheese,ganesan2014sodium,grasso2021plantcheese};
and total fat and total sugars in sweet
dairy~\citep{guinard1997sugars,dandrea2023yogurt}.

\subsection{Category Subsets}
\label{app:category-subsets}

Per-category sample sizes in NECTAR are too small for category-specific
modeling (as few as 5 products per category, mean $\approx 9$). We
therefore group the 24 categories into the same four subsets used for
nutrition feature selection (Appendix~\ref{app:nutrition-selection}):

\begin{description}[leftmargin=1.2em,itemsep=2pt]
\item[\textit{Meat} ($n{=}121$):] bacon, bratwurst, breakfast sausages,
burgers, breaded chicken filet, chicken strips, deli ham, deli turkey,
hot dogs, meatballs, nuggets, pulled pork, steak, unbreaded chicken
breast.
\item[\textit{Nonsweet dairy} ($n{=}62$):] barista milk, butter, cream
cheese, creamer, milk, sour cream.
\item[\textit{Cheese} ($n{=}16$):] cheddar cheese, mozzarella.
\item[\textit{Sweet dairy} ($n{=}16$):] ice cream (hard serve), yogurt.
\end{description}

Each product is encoded as one dimension of a 4-dimensional one-hot
indicator over these subsets for the supervised models. Total:
$215$ plant-based products.

\subsection{Food-Level Model Training Details}
\label{app:food-level-training}

\paragraph{Featurization.} See Section~\ref{sec:food-level-features} for the
five feature blocks (\textbf{S}~category, \textbf{N}~nutrition, \textbf{C}~compound,
\textbf{T}~ingredient text, \textbf{I}~image), the FDA-rank top-3 ingredient
aggregation, and the log-concentration weighting motivated by the
Weber--Fechner law (ablation in Table~\ref{tab:ingredient-agg-ablation}).

\paragraph{Preprocessing.} Per-modality \texttt{StandardScaler} fit on the
training fold; high-dimensional modalities (C, T, I) additionally
PCA-reduced to $95\%$ explained variance, fit within each LOOCV fold to
prevent train-time leakage. Missing image embeddings are imputed by the
$k{=}5$ nearest neighbors within the same category over the available
non-image features; missing nutrition values are zero-imputed.

\paragraph{LOOCV protocol.} $215$ folds, one plant-based product held out per
fold. For pairwise models (Bradley--Terry, hierarchical BT, Kernel RankSVM),
training pairs are constructed from within-category preference comparisons
of training-fold products only; the held-out product never appears on
either side of any training pair. For pointwise models (Ridge, LightGBM),
targets are mean panelist similarity ratings. Predictions across folds are
concatenated into a single per-product OOF prediction file at
\texttt{food\_similarity/results/oof\_predictions/\{model\}\_S\{subset\}.csv},
which is the input to all paper render scripts.

\paragraph{Hyperparameters.} All models use random seed $42$ where
applicable. Defaults below are committed in
\texttt{food\_similarity/models/} and were not swept; this is intentional
to keep the model comparison apples-to-apples and avoid the appearance of
per-model tuning advantage.
\begin{itemize}
    \item \textbf{Ridge.} \texttt{sklearn.linear\_model.Ridge}, $\alpha=1.0$,
      \texttt{fit\_intercept=True}.
    \item \textbf{Bradley--Terry.} \texttt{sklearn.linear\_model.LogisticRegression}
      on within-category feature-pair differences. $C=1.0$,
      \texttt{max\_iter}$=1000$.
    \item \textbf{Hierarchical Bradley--Terry.} Per-subset BT for
      \emph{meat}, \emph{nonsweet dairy}, and \emph{other dairy} (the latter
      merging cheese and sweet dairy, which have $2$ categories each --- too
      few for stable group-level estimation); empirical Bayes shrinkage
      toward the global coefficients. Shrinkage strength
      $\lambda_s = \alpha / (\alpha + n_{\text{pairs},s})$ with $\alpha$ set
      to the median of $\{n_{\text{pairs},s}\}$, so a median-sized subset
      receives $50\%$ shrinkage.
    \item \textbf{Kernel RankSVM.} \texttt{sklearn.svm.SVC} on pairwise
      difference features with RBF kernel, $C=1.0$,
      $\gamma=$ \texttt{"scale"}, \texttt{max\_iter}$=5000$.
    \item \textbf{LightGBM.} Regression objective (RMSE),
      \texttt{n\_estimators}$=100$, \texttt{learning\_rate}$=0.1$,
      \texttt{num\_leaves}$=31$, \texttt{min\_child\_samples}$=5$.
      \texttt{deterministic=True}, \texttt{force\_row\_wise=True},
      \texttt{num\_threads=1} for byte-identical reproduction.
\end{itemize}

\paragraph{Distance baselines (MMRF).} For each category, compute the mean
feature vector of animal-based reference products per modality. Score each
plant-based product by its distance to that centroid (cosine or
Euclidean), rank-normalize within category per modality, and average ranks
across enabled modalities. No training, no fold structure.

\paragraph{Zero-shot LLMs.} Prompt template, model versions (Gemini 3.1
Pro, Qwen 3.5 397B-A17B), and modality combinations are documented in
Appendix~\ref{app:llm-prompts}. Pair ordering is deterministic via MD5
hash to neutralize position bias; pairwise judgments are aggregated to a
per-product score by win rate (number of pairs won divided by number of
pairs participated in --- equivalent to Copeland's method), and within-category
ranks are computed from these scores.

\paragraph{Ensembles.} Nested LOOCV combines Bradley--Terry (over Category Subset, Nutrition, Compound, Ingredient Text, Image features, abbreviated SNCTI)
with Gemini 3.1 Pro. For each outer-fold held-out product $i$, the
inner-loop BT predictions for the evaluation partners $j$ are produced from
a BT trained on $\{k : k \neq i,\ k \neq j\}$ ($213$ products), preventing
partner-level label leakage. We report three combination strategies fit
per fold: non-negative least squares (\texttt{scipy.optimize.nnls}),
arithmetic mean, and rank averaging.

\paragraph{Confidence intervals.} BCa bootstrap~\citep{efron1987better},
$n_{\text{boot}}=10{,}000$, stratified by category, seed $42$. CIs are
computed once per (model, metric, scope) and cached at
\texttt{food\_similarity/results/cis\_*.csv} so table re-rendering does
not re-run the bootstrap.

\subsection{GNN Training Details}
\label{app:gnn-training}

\paragraph{Featurization.} Each molecule's canonicalized SMILES is parsed with RDKit and featurized using Chemprop's default \texttt{SimpleMoleculeMolGraphFeaturizer}. Atom features (72 dimensions): atomic number, total degree, formal charge, chirality tag, hydrogen count, hybridization, aromaticity, atomic mass. Bond features (14 dimensions): bond type, conjugation, ring membership, stereochemistry.

\paragraph{Architecture.} Directed bond-level message passing (\texttt{BondMessagePassing}) with hidden dimension 300 and batch normalization, mean aggregation over atoms, and a single-hidden-layer classification feed-forward network (also hidden dimension 300).

\paragraph{Training.} Adam optimizer with a Noam-like learning rate schedule (initial $10^{-4}$, peak $10^{-3}$, final $10^{-4}$). Batch size 50, maximum 50 epochs, cross-entropy loss (optionally class-weighted, see hyperparameter grid below), early stopping on validation loss with patience 10. Within-run checkpoint selection: minimum validation loss. Random seed: 42.

\paragraph{Hyperparameter grid.} The 12 configurations are the Cartesian product of depth $\in \{3, 5\}$, dropout $\in \{0, 0.2\}$, and class weighting $\in \{\text{none}, \text{inverse-frequency}, \text{sqrt-inverse-frequency}\}$. Across-grid selection: maximum validation macro-F1, committed before evaluating on the FART test set or any NECTAR data. Validation macro-F1 for all 12 configurations is reported in Table~\ref{tab:gnn-grid}; the selected configuration leads the runner-up by 0.8\,pp.

\begin{center}
\captionof{table}{Validation macro-F1 for all 12 configurations in the GNN hyperparameter grid (depth $\times$ dropout $\times$ class weighting). The selected configuration (\textbf{bold}) is the highest val macro-F1; this checkpoint is used in Tables~\ref{tab:molecular-prediction} and \ref{tab:encoder-transfer}. Selection committed before evaluating on the FART test set or any NECTAR data.}
\label{tab:gnn-grid}
\footnotesize
\setlength{\tabcolsep}{4pt}
\renewcommand{\arraystretch}{0.95}
\begin{tabular}{@{}cccc@{}}
\toprule
Depth & Dropout & Class weighting & Val macro-F1 \\
\midrule
\textbf{5} & \textbf{0.2} & \textbf{none} & \textbf{.825} \\
5 & 0 & none & .817 \\
5 & 0.2 & sqrt-inverse-frequency & .804 \\
3 & 0 & none & .800 \\
3 & 0.2 & none & .789 \\
3 & 0 & sqrt-inverse-frequency & .776 \\
5 & 0 & sqrt-inverse-frequency & .771 \\
3 & 0.2 & sqrt-inverse-frequency & .764 \\
3 & 0 & inverse-frequency & .759 \\
5 & 0.2 & inverse-frequency & .713 \\
5 & 0 & inverse-frequency & .711 \\
3 & 0.2 & inverse-frequency & .696 \\
\bottomrule
\end{tabular}
\end{center}

\paragraph{Embedding extraction.} The 300-dimensional penultimate-layer representation used as the compound encoder in Table~\ref{tab:encoder-transfer} is the activation after the classifier FFN's input projection, before the final classification head.

\subsection{Compute Resources}
\label{app:compute}

All supervised models, distance-based baselines, and the D-MPNN graph neural network were run on a single MacBook Pro (Apple M4 Pro, 14 cores, 48\,GB RAM) running macOS, using CPU only. No GPU acceleration was used. Bootstrap confidence intervals (10{,}000 resamples per evaluation) were computed in parallel across cores. End-to-end reproducibility takes under a few hours on this workstation, plus the OpenRouter API calls for LLM evaluation.

LLM evaluations (Gemini 3.1 Pro and Qwen 3.5 397B-A17B) were performed via the OpenRouter API using the prompt template in Appendix~\ref{app:llm-prompts}.

\section{Reproducibility}
\label{app:reproducibility}

This section outlines the files we release to reproduce the results in the paper. The high-level access structure is summarized in Section~\ref{sec:access-structure}; this appendix lists each artifact, its location, and what it backs in the paper. All file paths below are
relative to the repository at
\url{https://github.com/Food-Intelligence-Lab/tastebench/}.

\paragraph{Food-level task --- supervised baselines.}
The supervised baselines (Ridge, Bradley--Terry, hierarchical BT,
Kernel RankSVM, LightGBM) are fit per fold under leave-one-out
cross-validation, producing $215$ per-fold fits per model. Rather than serialize these intermediate fits, which would inflate the artifact
without adding reproducibility value, we ship the deterministic
per-product out-of-fold predictions at
\texttt{food\_similarity/results/oof\_predictions/}, one CSV per
(model, feature-subset) combination. Each CSV is the canonical input
to its corresponding paper render script. Hyperparameters, training
protocol, and the random seed ($42$) are documented in
Appendix~\ref{app:food-level-training}; given the released code,
cached feature matrices, and fixed seed, any reviewer can regenerate
the OOFs byte-identically by running
\texttt{food\_similarity/reproduce.sh}.

\paragraph{Food-level task --- LLM baselines.}
Gemini~3.1~Pro and Qwen 3.5 397B-A17B are accessed via the OpenRouter
API; we do not redistribute their weights. The deterministic outputs
(prompt completions and the resulting per-product win-rate scores)
ship at
\texttt{food\_similarity/results/oof\_predictions/llm\_*.csv}. Each is
a deterministic function of the prompt template
(Appendix~\ref{app:llm-prompts}), the MD5-hashed pair-ordering, and
the provider's serving snapshot at the time of the run.

\paragraph{Molecular task --- GNN checkpoints.}
All twelve trained D-MPNN checkpoints from the hyperparameter grid
(Appendix~\ref{app:gnn-training}) ship at
\texttt{molecular/results/grid/run\_*/ckpt.pt}, each accompanied by a
\texttt{config.yaml} (the hyperparameters that produced it) and a
\texttt{val\_metrics.json} (per-class
validation F1; the aggregated table across all runs is at
\texttt{molecular/results/grid/grid\_summary.csv}). The
validation-best run is symlinked at
\texttt{molecular/results/grid/best/}, and is the checkpoint whose
accuracy is reported in the GNN row of
Table~\ref{tab:molecular-prediction} and whose penultimate-layer
activations form the compound encoder evaluated in
Table~\ref{tab:encoder-transfer}. Total checkpoint footprint is
approximately $15$\,MB. Held-out FART test predictions for each run
are committed at
\texttt{molecular/results/grid/run\_*/fart\_test\_eval/predictions.parquet}.

\paragraph{Pretrained external encoders.}
Three external models are used for feature extraction and are not
redistributed:
\begin{itemize}
    \item \textbf{DINOv3~ViT-L/16}~\citep{oquab2024dinov}, fetched
      from the official Hugging Face release (food-level image
      encoder).
    \item \textbf{Qwen3-Embedding-0.6B}, fetched from the official
      Hugging Face release (food-level ingredient-text encoder).
    \item \textbf{FART}~\citep{zimmermann2025chemical}, fetched from
      the authors' GitHub release (compound encoder used by both
      tasks).
\end{itemize}
Cached embeddings (one matrix per modality, keyed by product code or
canonical SMILES) ship at \texttt{shared/data/caches/} and are
byte-checksummed against
\texttt{food\_similarity/results/input\_cache\_sha256.txt}.
Regeneration scripts that produce these caches from the upstream
model releases live at \texttt{shared/scripts/prepare\_caches/}.

\paragraph{Confidence-interval caches.}
$95\%$ BCa bootstrap intervals~\citep{efron1987better}
($n_{\text{boot}}=10{,}000$, stratified by category, seed $42$) are
cached alongside each table at
\texttt{food\_similarity/results/cis\_*.csv} and
\texttt{molecular/results/cis\_*.csv}. The cached intervals make
table re-rendering fast (no bootstrap re-run); the seeds and
$n_{\text{boot}}$ are baked into both the cache filenames and each
render script's header comment so any drift is visible in a diff.

\paragraph{End-to-end reproduction scripts.}
Two top-level scripts cover the two reviewer paths.
\texttt{verify\_paper.sh} re-renders every paper table and figure
from the committed out-of-fold predictions and parquet predictions,
with no external data and no model training; this is the lightweight
reviewer verification path. \texttt{food\_similarity/reproduce.sh}
regenerates the food-level OOFs from the gated NECTAR ratings
(requires NECTAR access; see \texttt{data/GATED.md}); this is the
heavyweight retrain path. The molecular GNN grid is regenerated via
\texttt{molecular/scripts/submit\_grid.sh dmpnn\_grid} followed by
\texttt{python -m molecular.src.train.select\_best\_and\_evaluate};
the D-MPNN trains on CPU and no GPU is required.

\section{Additional Results}

\subsection{Model Ablation}\label{app:add-results}

We report two complementary ablations. Table~\ref{tab:ablation} sweeps the supervised baselines and unsupervised distance models (MMRF) over all $15$ non-empty subsets of the $\{\text{N},\text{C},\text{T},\text{I}\}$ feature blocks defined in Section~\ref{sec:food-level-features}; Table~\ref{tab:ablation-llm} sweeps Gemini~3.1~Pro and Qwen~3.5~397B-A17B across all $7$ non-empty subsets of $\{\text{ingredients}, \text{nutrition}, \text{image}\}$. Pairwise accuracy with $95\%$ BCa CIs is reported in both. The headline result - the SNCTI / ingredients+image configuration - is the one carried into the main table; the ablation documents that no smaller subset matches it within its CI for either the supervised or LLM family.

\begin{center}
\captionof{table}{Feature ablation: pairwise accuracy (point on top, 95\% BCa CI from 10{,}000 resamples below) for supervised models and unsupervised distance predictors. S (category subset) is always included for supervised models. N = nutrition, C = compound, T = text, I = image. BT = Bradley--Terry, HBT = Hierarchical BT, KSVM = Kernel RankSVM, LGBM = LightGBM. \textbf{Bold} = best subset per model; $^\dag$ = CI overlaps the column leader (no significant difference at 95\%). Values below .500 indicate worse-than-random. Differences between subsets are within noise: for Bradley--Terry, nutrition$+$text scores .641 [.602,.700] and adding the remaining modalities lowers the point estimate (compound .618, image .606, both .610), but every one of these lies inside the nutrition$+$text interval.}
\label{tab:ablation}
\footnotesize
\setlength{\tabcolsep}{3pt}
\renewcommand{\arraystretch}{1.05}
\begin{tabular}{@{}lccccccc@{}}
\toprule
 & \multicolumn{5}{c}{Supervised (S+subset)} & \multicolumn{2}{c}{MMRF} \\
\cmidrule(lr){2-6} \cmidrule(lr){7-8}
Subset & Ridge & BT & HBT & KSVM & LGBM & Cos & L2 \\
\midrule
N & \shortstack{.447 \\ {\scriptsize [.389,.497]}} & \shortstack{.498 \\ {\scriptsize [.440,.553]}} & \shortstack{.493 \\ {\scriptsize [.439,.542]}} & \shortstack{.509 \\ {\scriptsize [.459,.564]}} & \shortstack{.511$^\dag$ \\ {\scriptsize [.455,.567]}} & \shortstack{.566$^\dag$ \\ {\scriptsize [.517,.624]}} & \shortstack{.549$^\dag$ \\ {\scriptsize [.496,.609]}} \\
C & \shortstack{.547$^\dag$ \\ {\scriptsize [.496,.601]}} & \shortstack{.572$^\dag$ \\ {\scriptsize [.521,.628]}} & \shortstack{.565$^\dag$ \\ {\scriptsize [.514,.622]}} & \shortstack{.563$^\dag$ \\ {\scriptsize [.513,.617]}} & \shortstack{.548$^\dag$ \\ {\scriptsize [.498,.603]}} & \shortstack{.498$^\dag$ \\ {\scriptsize [.439,.556]}} & \shortstack{.504$^\dag$ \\ {\scriptsize [.444,.562]}} \\
T & \shortstack{.594$^\dag$ \\ {\scriptsize [.549,.655]}} & \shortstack{.614$^\dag$ \\ {\scriptsize [.575,.671]}} & \shortstack{.567$^\dag$ \\ {\scriptsize [.518,.626]}} & \shortstack{.597$^\dag$ \\ {\scriptsize [.555,.654]}} & \shortstack{.546$^\dag$ \\ {\scriptsize [.495,.605]}} & \shortstack{.499$^\dag$ \\ {\scriptsize [.445,.551]}} & \shortstack{.520$^\dag$ \\ {\scriptsize [.468,.572]}} \\
I & \shortstack{.411 \\ {\scriptsize [.352,.453]}} & \shortstack{.514 \\ {\scriptsize [.459,.568]}} & \shortstack{.540$^\dag$ \\ {\scriptsize [.486,.595]}} & \shortstack{.512 \\ {\scriptsize [.456,.565]}} & \shortstack{.473 \\ {\scriptsize [.416,.524]}} & \shortstack{.559$^\dag$ \\ {\scriptsize [.504,.625]}} & \shortstack{.569$^\dag$ \\ {\scriptsize [.515,.635]}} \\
\midrule
NC & \shortstack{.548$^\dag$ \\ {\scriptsize [.495,.604]}} & \shortstack{.570$^\dag$ \\ {\scriptsize [.519,.629]}} & \shortstack{.563$^\dag$ \\ {\scriptsize [.514,.615]}} & \shortstack{.559$^\dag$ \\ {\scriptsize [.509,.615]}} & \shortstack{.543$^\dag$ \\ {\scriptsize [.489,.600]}} & \shortstack{.545$^\dag$ \\ {\scriptsize [.496,.599]}} & \shortstack{.541$^\dag$ \\ {\scriptsize [.489,.595]}} \\
NT & \shortstack{.589$^\dag$ \\ {\scriptsize [.542,.650]}} & \shortstack{\textbf{.641} \\ {\scriptsize [.603,.706]}} & \shortstack{.548$^\dag$ \\ {\scriptsize [.495,.605]}} & \shortstack{.595$^\dag$ \\ {\scriptsize [.552,.651]}} & \shortstack{.560$^\dag$ \\ {\scriptsize [.510,.616]}} & \shortstack{.558$^\dag$ \\ {\scriptsize [.512,.612]}} & \shortstack{.561$^\dag$ \\ {\scriptsize [.515,.619]}} \\
NI & \shortstack{.440 \\ {\scriptsize [.384,.488]}} & \shortstack{.529 \\ {\scriptsize [.477,.583]}} & \shortstack{.539$^\dag$ \\ {\scriptsize [.487,.596]}} & \shortstack{.510 \\ {\scriptsize [.453,.563]}} & \shortstack{.511$^\dag$ \\ {\scriptsize [.455,.565]}} & \shortstack{\textbf{.579} \\ {\scriptsize [.534,.637]}} & \shortstack{.581$^\dag$ \\ {\scriptsize [.536,.641]}} \\
CT & \shortstack{.589$^\dag$ \\ {\scriptsize [.544,.645]}} & \shortstack{.611$^\dag$ \\ {\scriptsize [.573,.668]}} & \shortstack{.539$^\dag$ \\ {\scriptsize [.485,.596]}} & \shortstack{\textbf{.619} \\ {\scriptsize [.581,.674]}} & \shortstack{.557$^\dag$ \\ {\scriptsize [.503,.616]}} & \shortstack{.487 \\ {\scriptsize [.435,.532]}} & \shortstack{.497$^\dag$ \\ {\scriptsize [.444,.548]}} \\
CI & \shortstack{.471 \\ {\scriptsize [.411,.522]}} & \shortstack{.537 \\ {\scriptsize [.482,.592]}} & \shortstack{.580$^\dag$ \\ {\scriptsize [.536,.635]}} & \shortstack{.576$^\dag$ \\ {\scriptsize [.526,.635]}} & \shortstack{.543$^\dag$ \\ {\scriptsize [.488,.602]}} & \shortstack{.550$^\dag$ \\ {\scriptsize [.498,.609]}} & \shortstack{.555$^\dag$ \\ {\scriptsize [.503,.615]}} \\
TI & \shortstack{.595$^\dag$ \\ {\scriptsize [.551,.653]}} & \shortstack{.608$^\dag$ \\ {\scriptsize [.569,.661]}} & \shortstack{.623$^\dag$ \\ {\scriptsize [.582,.680]}} & \shortstack{.613$^\dag$ \\ {\scriptsize [.576,.669]}} & \shortstack{.500$^\dag$ \\ {\scriptsize [.446,.551]}} & \shortstack{.544$^\dag$ \\ {\scriptsize [.493,.599]}} & \shortstack{.577$^\dag$ \\ {\scriptsize [.530,.634]}} \\
\midrule
NCT & \shortstack{\textbf{.606} \\ {\scriptsize [.567,.661]}} & \shortstack{.618$^\dag$ \\ {\scriptsize [.581,.678]}} & \shortstack{.534$^\dag$ \\ {\scriptsize [.479,.593]}} & \shortstack{.617$^\dag$ \\ {\scriptsize [.579,.673]}} & \shortstack{\textbf{.587} \\ {\scriptsize [.538,.647]}} & \shortstack{.530$^\dag$ \\ {\scriptsize [.484,.579]}} & \shortstack{.550$^\dag$ \\ {\scriptsize [.499,.606]}} \\
NCI & \shortstack{.504 \\ {\scriptsize [.449,.560]}} & \shortstack{.559$^\dag$ \\ {\scriptsize [.506,.618]}} & \shortstack{.591$^\dag$ \\ {\scriptsize [.548,.645]}} & \shortstack{.573$^\dag$ \\ {\scriptsize [.522,.630]}} & \shortstack{.521$^\dag$ \\ {\scriptsize [.468,.576]}} & \shortstack{.574$^\dag$ \\ {\scriptsize [.527,.632]}} & \shortstack{.571$^\dag$ \\ {\scriptsize [.522,.628]}} \\
NTI & \shortstack{.586$^\dag$ \\ {\scriptsize [.539,.644]}} & \shortstack{.606$^\dag$ \\ {\scriptsize [.567,.659]}} & \shortstack{\textbf{.630} \\ {\scriptsize [.590,.690]}} & \shortstack{.612$^\dag$ \\ {\scriptsize [.574,.667]}} & \shortstack{.539$^\dag$ \\ {\scriptsize [.484,.593]}} & \shortstack{.568$^\dag$ \\ {\scriptsize [.521,.627]}} & \shortstack{.587$^\dag$ \\ {\scriptsize [.539,.644]}} \\
CTI & \shortstack{.575$^\dag$ \\ {\scriptsize [.531,.630]}} & \shortstack{.621$^\dag$ \\ {\scriptsize [.584,.674]}} & \shortstack{.622$^\dag$ \\ {\scriptsize [.579,.679]}} & \shortstack{\textbf{.619} \\ {\scriptsize [.579,.675]}} & \shortstack{.539$^\dag$ \\ {\scriptsize [.485,.597]}} & \shortstack{.530$^\dag$ \\ {\scriptsize [.478,.586]}} & \shortstack{.551$^\dag$ \\ {\scriptsize [.499,.609]}} \\
\midrule
NCTI & \shortstack{.579$^\dag$ \\ {\scriptsize [.534,.633]}} & \shortstack{.610$^\dag$ \\ {\scriptsize [.571,.662]}} & \shortstack{.613$^\dag$ \\ {\scriptsize [.570,.671]}} & \shortstack{.618$^\dag$ \\ {\scriptsize [.578,.675]}} & \shortstack{.558$^\dag$ \\ {\scriptsize [.507,.615]}} & \shortstack{.566$^\dag$ \\ {\scriptsize [.519,.624]}} & \shortstack{\textbf{.591} \\ {\scriptsize [.546,.649]}} \\
\bottomrule
\end{tabular}
\end{center}

\begin{center}
\captionof{table}{Input ablation: pairwise accuracy with 95\% BCa CIs (10{,}000 resamples) for zero-shot LLMs across modality subsets. Ingr.\ = ingredient list text; Nutr.\ = nutrition facts; Img.\ = product image. \textbf{Bold} = best per model; $^\dag$ = CI overlaps the column leader (no significant difference at 95\%).}
\label{tab:ablation-llm}
\footnotesize
\setlength{\tabcolsep}{4pt}
\renewcommand{\arraystretch}{0.95}
\begin{tabular}{@{}lcc@{}}
\toprule
Input & Gemini & Qwen \\
\midrule
Ingr. & .607$^\dag$ [.566,.664] & .588$^\dag$ [.540,.648] \\
Nutr. & .592$^\dag$ [.543,.655] & .587$^\dag$ [.537,.646] \\
Img. & .589$^\dag$ [.544,.648] & .572$^\dag$ [.526,.627] \\
Ingr.+Nutr. & .622$^\dag$ [.579,.682] & .604$^\dag$ [.558,.665] \\
\textbf{Ingr.+Img.} & \textbf{.654} [.619,.713] & \textbf{.630} [.591,.688] \\
Nutr.+Img. & .610$^\dag$ [.566,.671] & .573$^\dag$ [.526,.632] \\
All & .625$^\dag$ [.582,.687] & .603$^\dag$ [.556,.664] \\
\bottomrule
\end{tabular}
\end{center}

\subsection{Per-category model comparison}
\label{app:per-cat-comparison}

Table~\ref{tab:per-category} breaks down pairwise accuracy by NECTAR product category for every model in the main results table. The breakdown shows substantial heterogeneity across the $24$ categories, with some categories notably easier than others for all models. Per-category sample sizes are small (as few as $5$ products, median $9$), so per-cell CIs are wide and individual category leaders should not be over-interpreted.

\begin{center}
\captionof{table}{Per-category pairwise accuracy (point on top, 95\% BCa CI from 10{,}000 resamples below; bootstrap restricted to within-category products) for one model per family. \textbf{Bold} = best model per category; $^\dag$ = CI overlaps the row leader (no significant difference at 95\%). $n$ = number of products in category; Pairs $= \binom{n}{2}$ within-category ranking pairs. Wide CIs on small-$n$ categories reflect the small effective sample size and are reported faithfully.}
\label{tab:per-category}
\footnotesize
\setlength{\tabcolsep}{3pt}
\renewcommand{\arraystretch}{1.05}
\begin{tabular}{@{}lrrccc@{}}
\toprule
Category & $n$ & Pairs & Gemini & BT & BT+Gemini \\
\midrule
Bacon & 10 & 45 & \shortstack{\textbf{.556} \\ {\scriptsize [.322,.789]}} & \shortstack{.511$^\dag$ \\ {\scriptsize [.222,.789]}} & \shortstack{.533$^\dag$ \\ {\scriptsize [.289,.778]}} \\
Barista Milk & 10 & 45 & \shortstack{.767$^\dag$ \\ {\scriptsize [.633,.922]}} & \shortstack{.700$^\dag$ \\ {\scriptsize [.521,.889]}} & \shortstack{\textbf{.811} \\ {\scriptsize [.656,.933]}} \\
Bratwurst & 9 & 36 & \shortstack{\textbf{.694} \\ {\scriptsize [.403,.903]}} & \shortstack{.639$^\dag$ \\ {\scriptsize [.375,.875]}} & \shortstack{.667$^\dag$ \\ {\scriptsize [.375,.875]}} \\
Breaded Chicken Filet & 5 & 10 & \shortstack{.300$^\dag$ \\ {\scriptsize [.100,.700]}} & \shortstack{\textbf{.800} \\ {\scriptsize [.300,.950]}} & \shortstack{.300$^\dag$ \\ {\scriptsize [.100,.700]}} \\
Breakfast Sausages & 10 & 45 & \shortstack{.589$^\dag$ \\ {\scriptsize [.311,.844]}} & \shortstack{.578$^\dag$ \\ {\scriptsize [.378,.711]}} & \shortstack{\textbf{.644} \\ {\scriptsize [.333,.900]}} \\
Burgers & 10 & 45 & \shortstack{.756$^\dag$ \\ {\scriptsize [.544,.911]}} & \shortstack{\textbf{.800} \\ {\scriptsize [.622,.922]}} & \shortstack{\textbf{.800} \\ {\scriptsize [.600,.933]}} \\
Butter & 10 & 45 & \shortstack{.689$^\dag$ \\ {\scriptsize [.444,.833]}} & \shortstack{.689$^\dag$ \\ {\scriptsize [.456,.878]}} & \shortstack{\textbf{.711} \\ {\scriptsize [.478,.833]}} \\
Cheddar Cheese & 8 & 28 & \shortstack{\textbf{.768} \\ {\scriptsize [.607,.929]}} & \shortstack{.661$^\dag$ \\ {\scriptsize [.304,.893]}} & \shortstack{.732$^\dag$ \\ {\scriptsize [.589,.911]}} \\
Chicken Strips & 10 & 45 & \shortstack{.522$^\dag$ \\ {\scriptsize [.367,.744]}} & \shortstack{.533$^\dag$ \\ {\scriptsize [.244,.822]}} & \shortstack{\textbf{.578} \\ {\scriptsize [.389,.778]}} \\
Cream Cheese & 10 & 45 & \shortstack{.667$^\dag$ \\ {\scriptsize [.378,.856]}} & \shortstack{.600$^\dag$ \\ {\scriptsize [.322,.811]}} & \shortstack{\textbf{.778} \\ {\scriptsize [.478,.922]}} \\
Creamer & 9 & 36 & \shortstack{\textbf{.875} \\ {\scriptsize [.778,.958]}} & \shortstack{.722$^\dag$ \\ {\scriptsize [.361,.889]}} & \shortstack{.833$^\dag$ \\ {\scriptsize [.625,.944]}} \\
Deli Ham & 9 & 36 & \shortstack{.458$^\dag$ \\ {\scriptsize [.181,.764]}} & \shortstack{\textbf{.708} \\ {\scriptsize [.458,.875]}} & \shortstack{.542$^\dag$ \\ {\scriptsize [.250,.819]}} \\
Deli Turkey & 7 & 21 & \shortstack{.667$^\dag$ \\ {\scriptsize [.333,.857]}} & \shortstack{.714$^\dag$ \\ {\scriptsize [.381,.905]}} & \shortstack{\textbf{.810} \\ {\scriptsize [.571,.952]}} \\
Hot Dogs & 10 & 45 & \shortstack{.667$^\dag$ \\ {\scriptsize [.356,.856]}} & \shortstack{\textbf{.756} \\ {\scriptsize [.589,.922]}} & \shortstack{.733$^\dag$ \\ {\scriptsize [.567,.889]}} \\
Ice Cream Hard Serve & 10 & 45 & \shortstack{.467$^\dag$ \\ {\scriptsize [.256,.756]}} & \shortstack{\textbf{.711} \\ {\scriptsize [.478,.889]}} & \shortstack{.533$^\dag$ \\ {\scriptsize [.333,.789]}} \\
Meatballs & 5 & 10 & \shortstack{\textbf{.600} \\ {\scriptsize [.100,.900]}} & \shortstack{.100$^\dag$ \\ {\scriptsize [.050,.250]}} & \shortstack{\textbf{.600} \\ {\scriptsize [.100,.900]}} \\
Milk & 18 & 153 & \shortstack{.699$^\dag$ \\ {\scriptsize [.536,.833]}} & \shortstack{.693$^\dag$ \\ {\scriptsize [.595,.801]}} & \shortstack{\textbf{.830} \\ {\scriptsize [.716,.928]}} \\
Mozzarella & 8 & 28 & \shortstack{\textbf{.589} \\ {\scriptsize [.268,.911]}} & \shortstack{.214$^\dag$ \\ {\scriptsize [.089,.429]}} & \shortstack{.464$^\dag$ \\ {\scriptsize [.161,.786]}} \\
Nuggets & 10 & 45 & \shortstack{\textbf{.533} \\ {\scriptsize [.322,.756]}} & \shortstack{.356$^\dag$ \\ {\scriptsize [.156,.556]}} & \shortstack{.333$^\dag$ \\ {\scriptsize [.156,.611]}} \\
Pulled Pork & 7 & 21 & \shortstack{\textbf{.667} \\ {\scriptsize [.357,.905]}} & \shortstack{.333$^\dag$ \\ {\scriptsize [.095,.619]}} & \shortstack{.571$^\dag$ \\ {\scriptsize [.333,.833]}} \\
Sour Cream & 5 & 10 & \shortstack{\textbf{.600} \\ {\scriptsize [.250,.900]}} & \shortstack{.500$^\dag$ \\ {\scriptsize [.100,.900]}} & \shortstack{\textbf{.600} \\ {\scriptsize [.250,.900]}} \\
Steak & 9 & 36 & \shortstack{\textbf{.833} \\ {\scriptsize [.722,.931]}} & \shortstack{.556$^\dag$ \\ {\scriptsize [.222,.806]}} & \shortstack{.778$^\dag$ \\ {\scriptsize [.542,.903]}} \\
Unbreaded Chicken Breast & 10 & 45 & \shortstack{.756$^\dag$ \\ {\scriptsize [.522,.911]}} & \shortstack{.600$^\dag$ \\ {\scriptsize [.367,.800]}} & \shortstack{\textbf{.800} \\ {\scriptsize [.600,.911]}} \\
Yogurt & 6 & 15 & \shortstack{\textbf{.467} \\ {\scriptsize [.133,.733]}} & \shortstack{.133$^\dag$ \\ {\scriptsize [.067,.300]}} & \shortstack{.400$^\dag$ \\ {\scriptsize [.133,.700]}} \\
\bottomrule
\end{tabular}
\end{center}

\subsection{Gemini ensemble comparison}
\label{app:per-model-nnls}

Table~\ref{tab:per-model-nnls} shows the pairwise accuracy of combining Gemini 3.1 Pro with each supervised model using a non-negative least squares meta-learner.

\begin{center}
\captionof{table}{Effect of swapping the supervised base in the
BT+Gemini NNLS ensemble. Each row reports the pairwise accuracy of
a supervised model alone (Standalone) and combined with
Gemini~3.1~Pro via nested LOOCV NNLS (+~Gemini NNLS), on the SNCTI
feature set ($n=215$ NECTAR plant-based products, 935 within-category
pairs). Point estimate above, 95\% BCa CI ($10{,}000$ resamples) below.
\textbf{Bold} = best in column; $^\dag$ = CI overlaps the leader
(no significant difference at 95\%).}
\label{tab:per-model-nnls}
\footnotesize
\setlength{\tabcolsep}{6pt}
\renewcommand{\arraystretch}{1.05}
\begin{tabular}{lcc}
\toprule
Supervised model & Standalone & + Gemini NNLS \\
\midrule
Gemini 3.1 Pro (alone) & \shortstack{\textbf{.654} \\ {\scriptsize [.619,.714]}} & — \\
Bradley--Terry & \shortstack{.610$^\dag$ \\ {\scriptsize [.571,.662]}} & \shortstack{\textbf{.683} \\ {\scriptsize [.655,.740]}} \\
Hierarchical BT & \shortstack{.613$^\dag$ \\ {\scriptsize [.571,.671]}} & \shortstack{.665$^\dag$ \\ {\scriptsize [.634,.720]}} \\
Ridge & \shortstack{.579$^\dag$ \\ {\scriptsize [.535,.633]}} & \shortstack{.641$^\dag$ \\ {\scriptsize [.604,.699]}} \\
Kernel RankSVM & \shortstack{.618$^\dag$ \\ {\scriptsize [.579,.674]}} & \shortstack{.659$^\dag$ \\ {\scriptsize [.628,.713]}} \\
LightGBM & \shortstack{.558 \\ {\scriptsize [.509,.616]}} & \shortstack{.619$^\dag$ \\ {\scriptsize [.579,.679]}} \\
\bottomrule
\end{tabular}
\end{center}

\subsection{Per-category NNLS metrics}
\label{app:per-cat-nnls}

Table~\ref{tab:per-category-nnls} supplies the per-category inputs ($\sigma_S/\mu_S$, $\rho_S$, $r$) used in the screening-utility worked example in Section~\ref{sec:theoretical} (Proposition~\ref{main-prep}), together with per-category sample sizes, NNLS pairwise accuracy with $95\%$ BCa CIs, and the rank of the true top-rated product. Expected uplift from a model-driven shortlist scales with $(\sigma_S/\mu_S)\cdot\rho$, so the categories with the largest screening gains are those ranking high on both columns.

\begin{center}
\captionof{table}{Per-category metrics for the NNLS ensemble (BT $+$ Gemini), covering both the per-category evaluation metrics and the inputs used in the worked example (Section~\ref{sec:theoretical}). $n$ = number of products in category; Pairs $= \binom{n}{2}$ within-category ranking pairs. $\sigma_S/\mu_S$ is the coefficient of variation of the panel-mean similarity within the category. $\rho_{\mathrm{S}}$ is Spearman rank correlation; $r$ is Pearson correlation between predicted and panel-mean scores. ``True-best rank'' is the position of the highest-rated product in the model's ranking (1 = top). Pairwise accuracy CIs are 95\% BCa (10{,}000 resamples). Aggregated across these 24 categories, NNLS achieves R@1 $=$ .292, R@2 $=$ .500, R@3 $=$ .542.}
\label{tab:per-category-nnls}
\footnotesize
\setlength{\tabcolsep}{3pt}
\renewcommand{\arraystretch}{1.05}
\begin{tabular}{@{}lrrccccc@{}}
\toprule
Category & $n$ & Pairs & $\sigma_S/\mu_S$ & Pw.\ Acc.\ & $\rho_{\mathrm{S}}$ & $r$ & True-best rank \\
\midrule
Bacon & 10 & 45 & .191 & \shortstack{.533 \\ {\scriptsize [.289,.778]}} & $+$.091 & $+$.235 & 5 \\
Barista Milk & 10 & 45 & .155 & \shortstack{.811 \\ {\scriptsize [.656,.933]}} & $+$.821 & $+$.753 & 1 \\
Bratwurst & 9 & 36 & .169 & \shortstack{.667 \\ {\scriptsize [.375,.875]}} & $+$.450 & $+$.433 & 1 \\
Breaded Chicken Filet & 5 & 10 & .064 & \shortstack{.300 \\ {\scriptsize [.100,.700]}} & $-$.600 & $-$.674 & 4 \\
Breakfast Sausages & 10 & 45 & .125 & \shortstack{.644 \\ {\scriptsize [.333,.900]}} & $+$.382 & $+$.124 & 1 \\
Burgers & 10 & 45 & .190 & \shortstack{.800 \\ {\scriptsize [.600,.933]}} & $+$.758 & $+$.812 & 4 \\
Butter & 10 & 45 & .135 & \shortstack{.711 \\ {\scriptsize [.478,.833]}} & $+$.636 & $+$.602 & 2 \\
Cheddar Cheese & 8 & 28 & .104 & \shortstack{.732 \\ {\scriptsize [.589,.911]}} & $+$.695 & $+$.746 & 2 \\
Chicken Strips & 10 & 45 & .113 & \shortstack{.578 \\ {\scriptsize [.389,.778]}} & $+$.164 & $+$.136 & 4 \\
Cream Cheese & 10 & 45 & .231 & \shortstack{.778 \\ {\scriptsize [.478,.922]}} & $+$.685 & $+$.814 & 7 \\
Creamer & 9 & 36 & .226 & \shortstack{.833 \\ {\scriptsize [.625,.944]}} & $+$.817 & $+$.921 & 4 \\
Deli Ham & 9 & 36 & .160 & \shortstack{.542 \\ {\scriptsize [.250,.819]}} & $+$.117 & $-$.074 & 7 \\
Deli Turkey & 7 & 21 & .183 & \shortstack{.810 \\ {\scriptsize [.571,.952]}} & $+$.750 & $+$.720 & 1 \\
Hot Dogs & 10 & 45 & .191 & \shortstack{.733 \\ {\scriptsize [.567,.889]}} & $+$.697 & $+$.624 & 2 \\
Ice Cream Hard Serve & 10 & 45 & .174 & \shortstack{.533 \\ {\scriptsize [.333,.789]}} & $+$.091 & $+$.432 & 6 \\
Meatballs & 5 & 10 & .256 & \shortstack{.600 \\ {\scriptsize [.100,.900]}} & $+$.300 & $+$.271 & 1 \\
Milk & 18 & 153 & .190 & \shortstack{.830 \\ {\scriptsize [.716,.928]}} & $+$.798 & $+$.794 & 9 \\
Mozzarella & 8 & 28 & .137 & \shortstack{.464 \\ {\scriptsize [.161,.786]}} & $-$.119 & $-$.336 & 8 \\
Nuggets & 10 & 45 & .155 & \shortstack{.333 \\ {\scriptsize [.156,.611]}} & $-$.455 & $-$.275 & 7 \\
Pulled Pork & 7 & 21 & .164 & \shortstack{.571 \\ {\scriptsize [.333,.833]}} & $+$.214 & $+$.197 & 2 \\
Sour Cream & 5 & 10 & .160 & \shortstack{.600 \\ {\scriptsize [.250,.900]}} & $+$.200 & $+$.405 & 2 \\
Steak & 9 & 36 & .214 & \shortstack{.778 \\ {\scriptsize [.542,.903]}} & $+$.733 & $+$.696 & 1 \\
Unbreaded Chicken Breast & 10 & 45 & .137 & \shortstack{.800 \\ {\scriptsize [.600,.911]}} & $+$.782 & $+$.812 & 1 \\
Yogurt & 6 & 15 & .134 & \shortstack{.400 \\ {\scriptsize [.133,.700]}} & $-$.200 & $-$.386 & 3 \\
\bottomrule
\end{tabular}
\end{center}

\subsection{Ingredient Aggregation Ablation}
\label{app:ingredient-agg}

Section~\ref{sec:food-level-features} aggregates compound embeddings to ingredients via a log-concentration-weighted mean, motivated by the Weber--Fechner law. Table~\ref{tab:ingredient-agg-ablation} compares five candidate aggregation rules (linear-concentration, log-concentration, max, arithmetic mean, and top-3-by-concentration) under both Bradley--Terry alone and the BT$+$Gemini NNLS ensemble. Within-block and all-pairs pairwise accuracy are reported with $95\%$ BCa CIs (within-block via cluster bootstrap on blocks). Differences between rules lie within $95\%$ CI overlap on both scopes; we adopt log-concentration on theoretical grounds (Weber--Fechner) rather than empirical advantage.

\begin{center}
\captionof{table}{Compound-to-ingredient aggregation ablation on the food-similarity task (LOOCV, $n=215$ NECTAR products, 935 within-category pairs). For ingredient $I$ with compounds $i \in I$ at concentrations $c_i$ and compound embeddings $z_i$, the per-ingredient embedding $z_I$ follows one of the rules below. \emph{All-pairs} covers all within-category pairs; \emph{within-block} restricts to the BIBD blocks panelists saw (apples-to-apples with humans). 95\% BCa CIs in brackets (10{,}000 resamples; cluster bootstrap on blocks for within-block). \textbf{Bold} = best per (model, scope); $^\dag$ = CI overlaps the leader (no significant difference at 95\%).}
\label{tab:ingredient-agg-ablation}
\footnotesize
\setlength{\tabcolsep}{4pt}
\renewcommand{\arraystretch}{0.95}
\begin{tabular}{@{}llcccc@{}}
\toprule
\multicolumn{2}{c}{} & \multicolumn{2}{c}{BT} & \multicolumn{2}{c}{BT $+$ Gemini NNLS} \\
\cmidrule(lr){3-4} \cmidrule(lr){5-6}
Aggregation & Embedding $z_I$ & All-pairs & Within-block & All-pairs & Within-block \\
\midrule
Uniform mean & $\tfrac{1}{n}\sum_i z_i$ & .592$^\dag$ [.551,.645] & .555$^\dag$ [.487,.616] & .673$^\dag$ [.642,.730] & .656$^\dag$ [.595,.713] \\
Element-wise max & $\max_i z_i$ (per coord.) & .605$^\dag$ [.566,.657] & .572$^\dag$ [.509,.631] & .679$^\dag$ [.649,.740] & \textbf{.661} [.600,.717] \\
Top-3 by conc. & $\tfrac{1}{3}\sum_{i \in \mathrm{top}_3(c)} z_i$ & \textbf{.651} [.621,.702] & .608$^\dag$ [.534,.670] & \textbf{.685} [.657,.737] & .648$^\dag$ [.585,.705] \\
Linear conc. & $\sum_i \tfrac{c_i}{\sum_j c_j}\, z_i$ & .622$^\dag$ [.586,.676] & \textbf{.613} [.554,.670] & .676$^\dag$ [.647,.730] & .651$^\dag$ [.588,.707] \\
Log conc. & $\sum_i \tfrac{\log(1+c_i)}{\sum_j \log(1+c_j)}\, z_i$ & .610$^\dag$ [.571,.662] & .577$^\dag$ [.504,.640] & .683$^\dag$ [.655,.740] & \textbf{.661} [.599,.714] \\
\bottomrule
\end{tabular}
\end{center}

\subsection{Per-class breakdown for molecular prediction}\label{app:molecular-per-class}

Table~\ref{tab:molecular-prediction-per-class} reports per-class precision, recall, F1, and AUROC for FART and our val-best D-MPNN on the $2{,}254$-molecule FART test set. The per-class F1 CIs overlap between the two models on every class: gaps are within $+2.3$\,pp on the four non-umami classes (largest on bitter, where the GNN nominally leads on both precision and recall). On umami, where the macro-F1 gap is concentrated, FART achieves higher precision ($1.000$) at the cost of lower recall ($0.333$) while the GNN achieves balanced precision and recall ($0.667$ each); however, with only $n{=}6$ test molecules a single additional correct prediction shifts F1 by ${\approx}0.17$, and the FART--GNN F1 gap is itself within CI overlap.

\begin{center}
\captionof{table}{Per-class breakdown for Table~\ref{tab:molecular-prediction}: Precision, Recall, F1 and one-vs-rest AUROC of FART and GNN on each FART test class, with 95\% BCa CIs (10{,}000 resamples; point on top, CI in brackets below). Support row gives the number of test molecules per class. Umami CIs span much of $[0, 1]$ because $n=6$. \textbf{Bold} = winner of FART vs GNN per (class, metric); $^\dag$ = CI overlaps the leader (no significant difference at 95\%).}
\label{tab:molecular-prediction-per-class}
\footnotesize
\setlength{\tabcolsep}{3pt}
\renewcommand{\arraystretch}{1.05}
\begin{tabular}{@{}lccccc@{}}
\toprule
 & Sweet & Bitter & Sour & Umami & Undefined \\
\midrule
Support & 1473 & 233 & 238 & 6 & 304 \\
\midrule
\multicolumn{6}{@{}l}{\textit{FART}} \\
\quad Precision & \shortstack{.944$^\dag$ \\ {\scriptsize [.931,.955]}} & \shortstack{.856$^\dag$ \\ {\scriptsize [.801,.901]}} & \shortstack{.897$^\dag$ \\ {\scriptsize [.853,.931]}} & \shortstack{\textbf{1.000} \\ {\scriptsize [.000,1.000]}} & \shortstack{.698$^\dag$ \\ {\scriptsize [.648,.746]}} \\
\quad Recall & \shortstack{.943$^\dag$ \\ {\scriptsize [.930,.954]}} & \shortstack{.712$^\dag$ \\ {\scriptsize [.651,.767]}} & \shortstack{.916$^\dag$ \\ {\scriptsize [.874,.947]}} & \shortstack{.333$^\dag$ \\ {\scriptsize [.000,.800]}} & \shortstack{\textbf{.789} \\ {\scriptsize [.742,.834]}} \\
\quad F1 & \shortstack{.944$^\dag$ \\ {\scriptsize [.935,.952]}} & \shortstack{.778$^\dag$ \\ {\scriptsize [.731,.818]}} & \shortstack{.906$^\dag$ \\ {\scriptsize [.877,.931]}} & \shortstack{.500$^\dag$ \\ {\scriptsize [.000,.909]}} & \shortstack{.741$^\dag$ \\ {\scriptsize [.702,.778]}} \\
\quad AUROC & \shortstack{.979$^\dag$ \\ {\scriptsize [.973,.983]}} & \shortstack{.951$^\dag$ \\ {\scriptsize [.932,.965]}} & \shortstack{.995$^\dag$ \\ {\scriptsize [.992,.997]}} & \shortstack{\textbf{.989} \\ {\scriptsize [.975,.999]}} & \shortstack{.958$^\dag$ \\ {\scriptsize [.946,.967]}} \\
\midrule
\multicolumn{6}{@{}l}{\textit{GNN}} \\
\quad Precision & \shortstack{\textbf{.945} \\ {\scriptsize [.932,.956]}} & \shortstack{\textbf{.869} \\ {\scriptsize [.817,.912]}} & \shortstack{\textbf{.906} \\ {\scriptsize [.864,.937]}} & \shortstack{.667$^\dag$ \\ {\scriptsize [.000,1.000]}} & \shortstack{\textbf{.736} \\ {\scriptsize [.686,.783]}} \\
\quad Recall & \shortstack{\textbf{.953} \\ {\scriptsize [.941,.963]}} & \shortstack{\textbf{.742} \\ {\scriptsize [.683,.796]}} & \shortstack{\textbf{.933} \\ {\scriptsize [.894,.960]}} & \shortstack{\textbf{.667} \\ {\scriptsize [.000,1.000]}} & \shortstack{.770$^\dag$ \\ {\scriptsize [.720,.816]}} \\
\quad F1 & \shortstack{\textbf{.949} \\ {\scriptsize [.940,.957]}} & \shortstack{\textbf{.801} \\ {\scriptsize [.757,.840]}} & \shortstack{\textbf{.919} \\ {\scriptsize [.891,.942]}} & \shortstack{\textbf{.667} \\ {\scriptsize [.000,.889]}} & \shortstack{\textbf{.752} \\ {\scriptsize [.713,.790]}} \\
\quad AUROC & \shortstack{\textbf{.981} \\ {\scriptsize [.975,.985]}} & \shortstack{\textbf{.958} \\ {\scriptsize [.939,.970]}} & \shortstack{\textbf{.996} \\ {\scriptsize [.993,.997]}} & \shortstack{.988$^\dag$ \\ {\scriptsize [.927,1.000]}} & \shortstack{\textbf{.966} \\ {\scriptsize [.958,.973]}} \\
\bottomrule
\end{tabular}
\end{center}

\subsection{Category-balanced evaluation and human comparison}
\label{app:category-balanced}

The main results weight every within-category pair equally, so larger categories
contribute more to the accuracy. Since category sizes range from 5 to 18 products, we report
micro-averaging (as in Table~\ref{tab:results}), macro-averaging across the 24
categories, and DerSimonian--Laird partial pooling toward a precision-weighted grand
mean (Table~\ref{tab:micro-macro}).

The best model ranks first under all three aggregation methods. Macro-averaging lowers every model by .000 to .055 and largely preserves the
ordering (Spearman .881). The partial-pooling estimate (.682) is close to the micro
average because ${\approx}91\%$ shrinkage ($\tau^{2} = .00208$) pulls toward the
pooled mean, where larger categories contribute more. Excluding the smallest categories does
not change the best model (Table~\ref{tab:exclusion-sweep}).
Table~\ref{tab:human-comparison} compares each model to the median panelist on
matched within-block pairs.

\begin{table}[t]
\centering
\footnotesize
\setlength{\tabcolsep}{6pt}
\renewcommand{\arraystretch}{0.95}
\caption{\textbf{Category-balanced evaluation preserves the ranking and the best
model.} Pairwise accuracy on the 935 within-category pairs under three aggregation
schemes. Micro is the paper's reported metric (Table~\ref{tab:results}); macro weights
each of the 24 categories equally; EB is DerSimonian--Laird partial pooling toward the
precision-weighted random-effects grand mean, with each category's sampling variance
taken from its product count. $\Delta$ $=$ macro $-$ micro. Point estimates only:
intervals for the macro and EB columns come from a within-category percentile bootstrap
rather than the BCa procedure used elsewhere in the paper, so they are not comparable to
the intervals in Table~\ref{tab:results} and are omitted here. Spearman correlation
between the macro and micro orderings is .881.}
\label{tab:micro-macro}
\begin{tabular}{@{}lcccc@{}}
\toprule
Model & Micro & Macro & $\Delta$ & Partial pooling (EB) \\
\midrule
\multicolumn{5}{@{}l}{\emph{Ensemble (BT $+$ Gemini)}} \\
\quad NNLS & \textbf{.683} & \textbf{.641} & $-$.042 & \textbf{.682} \\
\quad Rank average & .671 & .629 & $-$.041 & .668 \\
\quad Mean & .623 & .595 & $-$.028 & .613 \\
\multicolumn{5}{@{}l}{\emph{Zero-shot LLMs}} \\
\quad Gemini 3.1 Pro & .654 & .633 & $-$.021 & .666 \\
\quad Qwen 3.5 397B-A17B & .630 & .611 & $-$.019 & .639 \\
\multicolumn{5}{@{}l}{\emph{Supervised}} \\
\quad Kernel RankSVM & .618 & .587 & $-$.031 & .612 \\
\quad Hierarchical BT & .613 & .558 & $-$.055 & .595 \\
\quad Bradley--Terry & .610 & .567 & $-$.043 & .567 \\
\quad Ridge & .579 & .537 & $-$.042 & .540 \\
\quad LightGBM & .558 & .532 & $-$.026 & .555 \\
\multicolumn{5}{@{}l}{\emph{Unsupervised}} \\
\quad MMRF (cosine) & .574 & .573 & $-$.002 & .589 \\
\quad MMRF (L2) & .571 & .570 & $-$.000 & .600 \\
\bottomrule
\end{tabular}
\end{table}

\begin{table}[t]
\centering
\footnotesize
\setlength{\tabcolsep}{6pt}
\renewcommand{\arraystretch}{0.95}
\caption{\textbf{Dropping small categories does not change the best model.} The best
model (NNLS over Bradley--Terry and Gemini 3.1 Pro) as categories below each
product-count threshold are excluded. Rank is among the twelve models of
Table~\ref{tab:micro-macro}, under micro and macro averaging respectively. The
smallest category has 5 products, so the first row is the full benchmark.}
\label{tab:exclusion-sweep}
\begin{tabular}{@{}ccccc@{}}
\toprule
Min.\ products & Categories kept & Micro & Macro & Rank (micro / macro) \\
\midrule
5 & 24 & .683 & .641 & 1 / 1 \\
6 & 21 & .689 & .661 & 1 / 1 \\
7 & 20 & .694 & .674 & 1 / 1 \\
8 & 18 & .694 & .672 & 1 / 1 \\
9 & 16 & .701 & .682 & 1 / 1 \\
10 & 12 & .700 & .674 & 1 / 1 \\
\bottomrule
\end{tabular}
\end{table}

\begin{table}[t]
\centering
\footnotesize
\setlength{\tabcolsep}{6pt}
\renewcommand{\arraystretch}{0.95}
\caption{\textbf{The model against individual panelists, on matched pairs.}
Agreement quantities for the panel and for the best model. The panelist rows are
computed on the within-block pairs each panelist actually rated; the model is scored on
those same pairs in the first model row, and on all 935 within-category pairs in the
second. The category-balanced row is the partial-pooled figure from
Table~\ref{tab:micro-macro}. Rows are not mutually comparable: the panelist median is
a per-panelist quantity on within-block pairs, not a category-macro one.}
\label{tab:human-comparison}
\begin{tabular}{@{}lc@{}}
\toprule
Quantity & Value \\
\midrule
\multicolumn{2}{@{}l}{\emph{Panel}} \\
\quad Panelist vs.\ another single panelist, pairwise ordering & .547 \\
\quad Median panelist vs.\ panel mean, within block & .650 (IQR .500--.750, $n = 627$) \\
\quad Krippendorff's $\alpha$ (ordinal, per block) & .077 (mean) \\
\quad Kendall's $W$ (rank concordance, per block) & .108 (median) \\
\quad Split-half reliability of the panel mean & .825 \\
\midrule
\multicolumn{2}{@{}l}{\emph{Best model}} \\
\quad Within-block pairs & .661 \\
\quad All within-category pairs & .683 \\
\quad Category-balanced, partial-pooled & .682 \\
\bottomrule
\end{tabular}
\end{table}

\subsection{Recognition and contamination audit}
\label{app:recognition-audit}

We audit whether the best LLM configuration (ingredients and image) recognizes
specific products rather than reasoning from composition, and whether
training-data contamination could explain its advantage.

\mypar{What surfaces in the reasoning} We scan the saved chain-of-thought for four
signatures (Table~\ref{tab:recognition-signatures}). The model identifies a specific
product in 12.5\% of pairs and references knowledge about a product's reputation
in 2.4\%; neither signature is associated with higher accuracy. These
identifications are inference, not input leakage, since the cleaned ingredient
text contains no brand tokens and the images show plated food.

\begin{table}[t]
\centering
\footnotesize
\setlength{\tabcolsep}{6pt}
\renewcommand{\arraystretch}{0.95}
\caption{\textbf{Recognition rarely surfaces, and it does not track accuracy.}
Prevalence of each signature in the saved reasoning for the ingredients-plus-image
configuration, and the accuracy difference between the pairs where it appears and the
rest. Scored on the 931 pairs for which a graded judgment and a Bradley--Terry
comparison both exist: of the 935 within-category pairs, three have tied panel means and
are not gradable, and one more is unmatched. Fisher's exact test, two-sided; no
signature reaches significance. Signatures are not mutually exclusive.}
\label{tab:recognition-signatures}
\begin{tabular}{@{}lccc@{}}
\toprule
Signature & Prevalence & $\Delta$ accuracy (flagged $-$ clean) & Fisher $p$ \\
\midrule
Product identification & 12.5\% & $+$.042 & .403 \\
Brand mention (mostly analogy) & 10.2\% & $+$.013 & .820 \\
External-performance recall & 2.4\% & $-$.026 & .821 \\
Any signature & 14.2\% & $+$.032 & .489 \\
\bottomrule
\end{tabular}
\end{table}

\mypar{Margin decomposition} If recognition helped, the Gemini$-$BT margin should be
larger on recognized pairs, since Bradley--Terry sees only features and cannot benefit
from recognition (Table~\ref{tab:recognition-margin}). The margin is if anything
smaller on recognized pairs, and the interaction is not significant.
Recognized pairs appear to be intrinsically easier; even Bradley--Terry
scores .672 on them versus .605 on the rest.

\begin{table}[t]
\centering
\footnotesize
\setlength{\tabcolsep}{6pt}
\renewcommand{\arraystretch}{0.95}
\caption{\textbf{The LLM's margin is not concentrated where it recognized the
product.} Per-pair margin of Gemini 3.1 Pro over Bradley--Terry, split by whether the
reasoning identified the product. Bradley--Terry sees only features, so it holds pair
difficulty constant. Over these 931 pairs Gemini scores .662 and
Bradley--Terry .613, a margin of $+$.048; both differ slightly
from Table~\ref{tab:results}, which scores the same models over all 935 pairs
including the three ties. Bradley--Terry also improves on the recognized subset
(.605 to .672), which is what identifies those pairs as easier rather
than leaked. The interaction interval is a within-category bootstrap.}
\label{tab:recognition-margin}
\begin{tabular}{@{}lcc@{}}
\toprule
Split & $n$ & Gemini $-$ Bradley--Terry \\
\midrule
Recognized & 116 & $+$.026 \\
Not recognized & 815 & $+$.052 \\
\midrule
Interaction & & $-$.026, 95\% CI [$-$.159,$+$.108] (n.s.) \\
\bottomrule
\end{tabular}
\end{table}

\mypar{An interventional control} We re-ran the benchmark with only the system prompt
changed, instructing the model to reason from composition and appearance only; the user
prompt is byte-identical (Table~\ref{tab:brand-blind}). Brand mentions drop from
10.8\% to 1.7\%; accuracy changes by $-$.005 (.639 to .634, McNemar $p = .71$). Both
arms must be run in the same session because the unconstrained prompt flips
${\sim}29\%$ of predictions across runs at temperature 0, and comparing against a
previously cached baseline would show a spurious $-$.028 ($p = .03$).

\begin{table}[t]
\centering
\footnotesize
\setlength{\tabcolsep}{5pt}
\renewcommand{\arraystretch}{0.95}
\caption{\textbf{Forbidding brand reasoning barely moves accuracy.} Original versus
brand-blind system prompt, both re-run fresh in the same session so that model drift
cancels; the user prompt is byte-identical between arms, and 932 pairs are scored
in both. Flip rate is the fraction of pairs whose predicted winner changes. The
brand-mention rate is measured on the same reasoning scan as
Table~\ref{tab:recognition-signatures}.}
\label{tab:brand-blind}
\begin{tabular}{@{}lcccccc@{}}
\toprule
Arm & Original & Brand-blind & $\Delta$ acc.\ & Flip rate & McNemar $p$ & Brand mentions \\
\midrule
Gemini 3.1 Pro & .639 & .634 & $-$.005 & 12.6\% & .712 & 10.8\% $\to$ 1.7\% \\
\bottomrule
\end{tabular}
\end{table}

\mypar{Training-data contamination} The dairy TASTY winners were released on
2026-03-18, after both model releases (Gemini 3.1 Pro 2026-02-19, Qwen
3.5-397B-A17B 2026-02-16), and Gemini's documented January 2025 training cutoff
also predates the March 2025 meat winners. The per-set accuracies are not evidence
of memorization, since the memorization-free Bradley--Terry baseline shows the same
dairy-over-meat gap.

\mypar{Caveats} The three recognition tests rely on what surfaces in the
reasoning, and chain-of-thought can be unfaithful, so they bound a latent
contribution rather than excluding it. In particular, they cannot exclude
the possibility that the LLM
draws on a product's pre-existing reputation when that reputation correlates with true
taste performance. Post-release NECTAR collections will provide a prospective
hold-out.

\section{Architecture with Worked Example}\label{app:architecture-diagram}

Figure~\ref{fig:architecture} traces a single product, a plant-based burger
in the Burgers category (``Product~A''), through the full TasteBench ensemble
pipeline. The same pipeline is applied independently to every product, and we
show one instance to make the data flow concrete. All four modalities
(ingredient text, nutrition, product image, and molecular composition) are
encoded into per-modality vectors, which are standardized and, for the
high-dimensional text, image, and compound blocks, reduced by per-modality PCA
to 95\% explained variance (fit within each LOOCV fold). The low-dimensional
nutrition and category-subset blocks are not PCA-reduced. The standardized,
concatenated SNCTI vector is the input to the supervised component (Bradley--Terry,
the supervised model carried into the final ensemble) fit by logistic
regression on within-category feature-pair differences and scored at inference
as a product's expected number of within-category tournament wins,
$\sum_{k} \sigma\!\big(w\cdot(x - x_k)\big)$ over the same-category opponents
$x_k$. The non-molecular modalities are additionally read in raw form by the
zero-shot LLM scorer (Gemini~3.1~Pro), which does not consume the encoded or
fused features; the LLM score is a product's win rate over
its within-category pairwise comparisons. The two scores, $s_{\mathrm{BT}}$ and
$s_{\mathrm{LLM}}$, are combined by non-negative least squares
($w_1, w_2 \geq 0$, fit by nested leave-one-out cross-validation) to produce the
predicted within-category ranking. The figure's ``Gemini inputs'' grouping marks
the modalities the LLM can read directly from the prompt template
(Appendix~\ref{app:llm-prompts}). The deployed ensemble uses the ingredients-plus-image
configuration, with the ingredients-plus-nutrition-plus-image variant evaluated
as an ablation (Table~\ref{app:per-cat-comparison}). Ingredient and nutrition values are
abbreviated for readability, and the nine opponents shown ($X_1,\dots,X_9$) are
specific to the ten-product Burgers category, the sizes of other categories vary.

\begin{figure*}[tp]
\centering
\includegraphics[width=\textwidth]{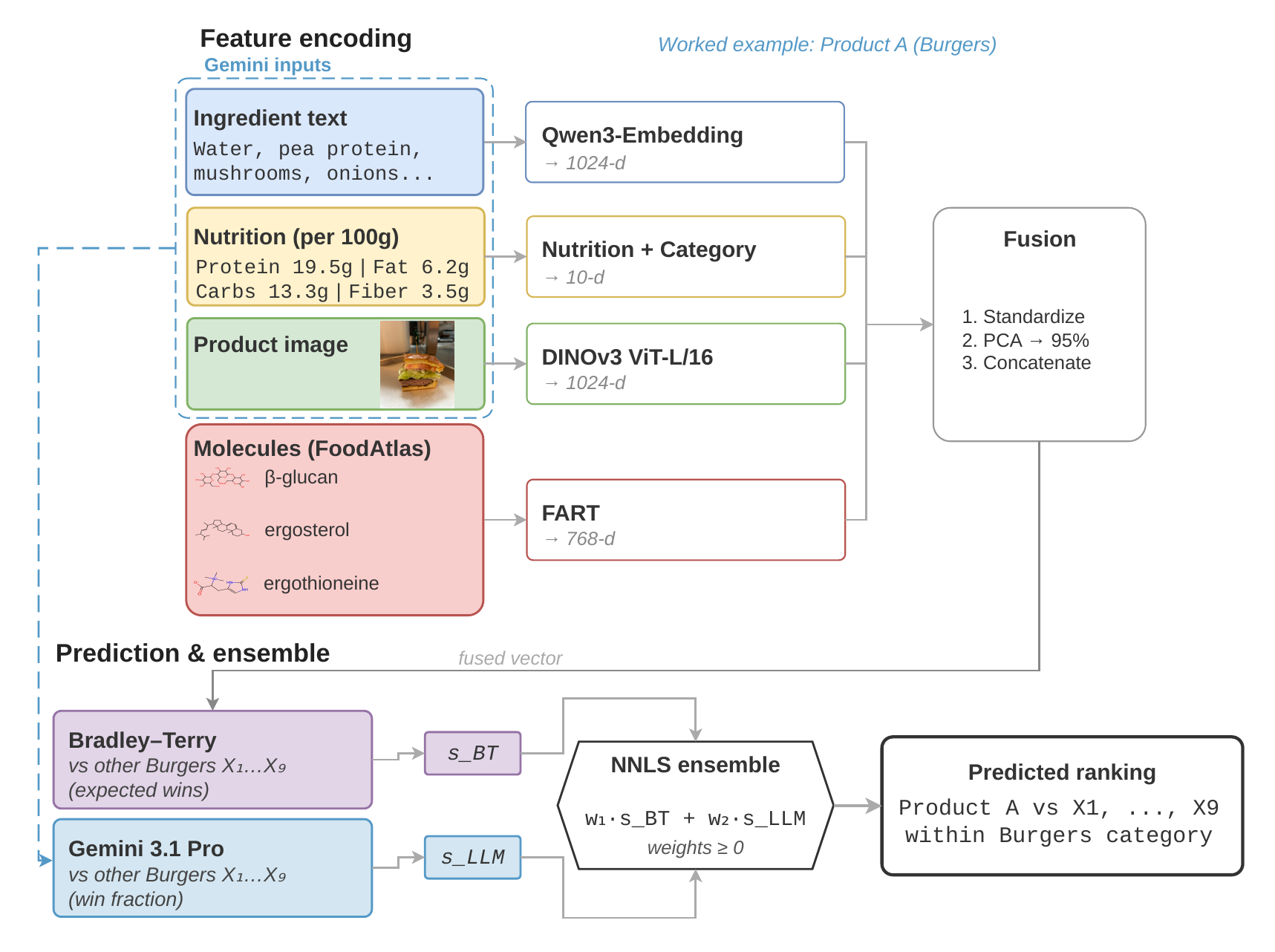}
\caption{Worked example of the ensemble pipeline for one plant-based burger (Product~A) in the Burgers category ($n=10$). All four modalities feed the supervised Bradley--Terry scorer through encoding and fusion; the non-molecular modalities are also read in raw form by the zero-shot LLM scorer (Gemini~3.1~Pro). The two scores are combined via non-negative least squares to produce the predicted within-category ranking.}
\label{fig:architecture}
\end{figure*}



\end{document}